\renewcommand{\v}{{\bf v}}
\newcommand{\x}{{\bf x}}
\newcommand{\y}{{\bf y}}
\newcommand{\z}{{\bf z}}
\newcommand{\A}{{\bf A}}
\newcommand{\Scal}{{\mathcal{S}}}
\newcommand{\Ocal}{{\mathcal{O}}}
\newcommand{\Pcal}{{\mathcal{P}}}
\newcommand{\Xcal}{{\mathcal{X}}}
\newcommand{\argmin}{\operatornamewithlimits{argmin}}
\newcommand{\Argmin}{\operatornamewithlimits{Argmin}}
\newcommand{\lrincir}[1]{\left( #1 \right)}
\newcommand{\norm}[1]{\lVert#1\rVert}
\newcommand{\lrnorm}[1]{\left\lVert#1\right\rVert}
\newcommand{\lrangle}[1]{\left\langle#1 \right\rangle}
\newcommand{\RR}{\mathbb{R}}
\newcommand{\refabovecir}[2]{\displaystyle_{#1}^{#2}}
 \newtheorem{Definition}{\bf{Definition}}
 \newtheorem{Theorem}{\bf{Theorem}}
 \newtheorem{Lemma}{\bf{Lemma}}
 \newtheorem{Corollary}{\bf{Corollary}}
 \newtheorem{Assumption}{\bf{Assumption}}
 \newtheorem{Remark}{\bf{Remark}}
\date{}
\begin{document}

\title{Dynamic Online Gradient Descent with Improved Query Complexity: A Theoretical Revisit}

\author{Yawei Zhao$^1$, En Zhu$^1$, Xinwang Liu$^1$, and Jianping Yin$^2$ \\ 
$^1$National University of Defense Technology, Changsha, 410073, China. \\ 
$^2$Dongguan University of Technology, Dongguan, Guangdong, 523808, China. \\ 
E-mail: \{zhaoyawei, enzhu, xinwangliu\}@nudt.edu.cn; jpyin@dgut.edu.cn.
}

% make the title area
\maketitle

\begin{abstract}
We provide a new theoretical analysis framework to investigate online gradient descent in the dynamic environment. Comparing with the previous work, the new framework recovers the state-of-the-art dynamic regret, but does not require extra gradient queries for every iteration. Specifically, when functions are $\alpha$ strongly convex and $\beta$ smooth, to achieve the state-of-the-art dynamic regret, the previous work requires  $\Ocal(\kappa)$ with $\kappa = \frac{\beta}{\alpha}$ queries of gradients at every iteration. But, our framework shows that the query complexity can be improved to be $\Ocal(1)$, which does not depend on $\kappa$. The improvement is significant for ill-conditioned problems because that their objective function usually has a large $\kappa$. Those ill-conditioned problems wildely exist in image processing applications, e.g., image deblurring.

\end{abstract}

\section{Introduction}
Online Gradient Descent (OGD) has drawn much attention in the  community of machine learning \cite{Zhu:2015tr,Hazan2007Adaptive,Hall:2015ct,ShalevShwartz:2012dz,Garber:2018wf,Bedi:2018te}. It is widely used in various applications such as online recommendation \cite{Song:2008:RAT}, search ranking \cite{Moon:2010}. Generally, OGD is formulated as a game between a learner and an adversary. At the $t$-th round of the game, the learner submits $\x_t$ from the feasible set $\Xcal$, and the adversary selects a function $f_t:\Xcal \mapsto \RR$. Then, the function $f_t$ is returned to the learner, and incurs the loss $f_t(\x_t)$. 

Recently, there has been a surge of interest in analyzing OGD by using the dynamic regret \cite{Zinkevich:2003,Mokhtari:2016jz,Yang:2016ud,Lei:2017:CUO}. The dynamic regret is usually defined as 
\begin{align}
\label{eq:regret_dy}
R_T^{\ast} = \sum\limits_{t=1}^T f_t(\x_t) - \sum\limits_{t=1}^T f_t(\x_t^{\ast}), 
\end{align} where $\x_t^{\ast} \in \argmin_{\x\in\Xcal} f_t(\x)$. Unfortunately, it is well-known that a sublinear dynamic regret bound cannot be achieved in the worst case \cite{Zinkevich:2003}. The reason is that the functions $f_1, ..., f_T$ may be changed arbitrarily in the dynamic environment. But, it is possible to upper bound the dynamic regret in terms of certain regularity of the comparator sequence. Those regularities are usually defined as the \textit{path length} \cite{Mokhtari:2016jz,Yang:2016ud}: 
\begin{align}
\nonumber
\Pcal_T^{\ast} := \Pcal(\x_1^{\ast}, ..., \x_T^{\ast}) = \sum\limits_{t=2}^T \norm {\x_t^{\ast} - \x_{t-1}^{\ast}},
\end{align}  or \textit{squared path length} \cite{Zhang:2016wl}: 
\begin{align}
\nonumber
\Scal_T^{\ast} := \Scal(\x_1^{\ast}, ..., \x_T^{\ast}) = \sum\limits_{t=2}^T \norm {\x_t^{\ast} - \x_{t-1}^{\ast}}^2.
\end{align} They capture the cumulative Euclidean norm or the square of Euclidean norm of the difference between successive comparators. When all the functions $f_1, ..., f_T$ are $\alpha$-strongly convex and $\beta$-smooth, the dynamic regret is bounded by $\Ocal(\Pcal_T^{\ast})$ \cite{Mokhtari:2016jz}.  When the local variations are small, $\Scal_T^{\ast}$ is much smaller than $\Pcal_T^{\ast}$. Thus, the state-of-the-art dynamic regret of OGD is improved to be $\Ocal(\min\{\Pcal_T^{\ast}, \Scal_T^{\ast}\})$ \cite{Zhang:2016wl}. 

\begin{table}
\centering
\caption{ Our method OGD recovers the state-of-the-art regret with improved query complexity.   }
\begin{tabular}{c|c|c|c}
\hline 
Algo. & Obj.  type  & Dynamic regret & Avg. queries\tabularnewline
\hline
\hline 
\cite{Mokhtari:2016jz} & strongly convex  & $\Ocal(\Pcal_t^{\ast})$ & $\Ocal(1)$\tabularnewline \hline
\cite{Zhang:2016wl} & strongly convex  & $\Ocal( \min\{\Scal_T^{\ast}, \Pcal_t^{\ast}\} )$ & $\Ocal(\kappa)$\tabularnewline
\hline 
Ours & strongly convex  & $\Ocal( \min\{\Scal_T^{\ast}, \Pcal_t^{\ast}\} )$  & $\Ocal(1)$\tabularnewline
\hline 
\end{tabular}
\label{table_dynamic_regret}
\end{table}

But, to achieve the state-of-the-art dynamic regret, i.e., $\Ocal(\min\{\Pcal_T^{\ast}, \Scal_T^{\ast}\})$, the variant of OGD in \cite{Zhang:2016wl} has to query  $\Ocal(\kappa)$ gradients for every iteration. Here, $\kappa := \frac{\beta}{\alpha}$ represents the condition number for the $\beta$ smooth and $\alpha$ strongly convex objective function $f_t$. For a large $\kappa$, the extremely large query complexity makes it not practical in the online setting. In the paper, we investigate the basic online gradient descent, and provide a new theoretical analysis framework. \textbf{Using the new analysis framework, we show that the dynamic regret $\Ocal(\min\{\Pcal_T^{\ast}, \Scal_T^{\ast}\})$ can be achieved with $\Ocal(1)$, instead of  $\Ocal(\kappa)$ queries of gradients in \cite{Zhang:2016wl}.}  Main theoretical results are outlined in Table \ref{table_dynamic_regret} briefly.  

The improvement of the query complexity is vitally important for ill-conditioned\footnote{`ill-conditioned' may be notated by `ill-posed' or `badly posed' in some literatures.} problems \cite{Tarantola:2004:IPT,Hansen:2006,ill-posed-doi:10} whose objective function usually has a large condition number, i.e., $\kappa$. Let us take the image deblurring problem as an example \cite{Hansen:2006}.  Suppose we have a blurred image $\y$, which is modeled by using an unknown real image $\x$ and a blurring matrix $\A$. That is, $\y = \A \x$. Here, $\A$ is usually a non-singular matrix with a large condition number, e.g., $\kappa = 10^6$. We want to recover the real image $\x$ from the blurred image $\y$, that is, $\x = \A^{-1}\y$. Comparing with the method in \cite{Zhang:2016wl}, our new analysis framework shows that OGD is good enough, and the required queries of gradients can be reduced by multiple orders.

The paper is organized as follows. Section \ref{sect_related_work} reviews the related work. Section \ref{sect_preliminary} presents the preliminaries. Section \ref{sect_theoretical_analysis} presents our theoretical analysis framework. Section \ref{sect_dynamic_regret_strongly_convex} presents the improved bounds of regret and query complexity for the strongly convex case. Section \ref{sect_conclude} concludes the paper.

\section{Related work}
\label{sect_related_work}

\subsection{Regrets of OGD in the static environment.}
Online gradient descent in the static environment has been extensively investigated over the last ten years. The sublinear static regrets for smooth or strongly convex functions have been obtained in many literatures \cite{ShalevShwartz:2012dz,Hazan2016Introduction,Duchi:2011,Zinkevich:2003}. Specifically, when $f_t(\cdot)$ is strongly convex, the regret of online gradient descent is $\Ocal(\log T)$ \cite{Hazan2016Introduction}. When 
$f_t(\cdot)$ is convex but not strongly convex, the regret of online gradient descent is $\Ocal(\sqrt{T})$ \cite{Hazan2016Introduction}.

\subsection{Regrets of OGD in the dynamic environment.}
When all the functions $f_1, ..., f_T$ are $\alpha $ strongly-convex and $\beta$ smooth, the dynamic regret of OGD is $\Ocal(\Pcal_T^{\ast})$ \cite{Mokhtari:2016jz,Yang:2016ud}. If OGD queries $\Ocal(\kappa)$ gradients at every iteration, the dynamic regret of OGD can be improved to be $\Ocal(\min\{\Pcal_T^{\ast}, \Scal_T^{\ast}\})$ \cite{Zhang:2016wl}. But, our analysis framework shows that the $\Ocal(1)$ gradient queries for every iteration is enough to  obtain $\Ocal(\min\{\Pcal_T^{\ast}, \Scal_T^{\ast}\})$ dynamic regret.  Additionally,  there are some other regularities including the functional variation \cite{Zhu:2015tr,Besbes:2015gb} and  the gradient variation \cite{Chiang2012Online}. Those regularities measure different aspects of the variation in the dynamic environment. Since they are not comparable directly, some researchers consider to bound the dynamic regret by using the mixed regularity \cite{Jadbabaie:2015wg}. Extending our theoretical framework to different regularities is an interesting  avenue for future work.

Besides, the new proposed theoretical analysis framework is inspired by \cite{Joulani:2017un}. \cite{Joulani:2017un} provides a theoretical analysis framework in the static environment, but our theoretical analysis framework works in the dynamic environment. 

\section{Preliminaries}
\label{sect_preliminary}

\subsection{Notations and assumptions}
We use the following notation. 
\begin{itemize}
\item 
The bold lower-case letters, e.g., $\x$  represent vectors.  The normal letters, e.g., $\beta$ represent a scalar number.  %and the bold upper-case letters, e.g., $\A$ represent the matrices.
\item $\eta_t$ represents the learning rate of Algorithm \ref{algo_ogd} at  the $t$-th iteration, and $\eta_{\min} := \min\{\eta_1, ..., \eta_T\}$. 
\item 
The condition number $\kappa$ is defined by $\kappa := \frac{\beta}{\alpha}$ for any  $\beta$ smooth and  $\alpha$ strongly convex  function $f_t$.  
\item 
$\lrnorm{\cdot}$ represents the $l_2$ norm of a vector.  
\item 
$\Pi_\Xcal(\cdot)$ represents the projection to a set $\Xcal$. 
\item $\Xcal_t^{\ast} := \argmin_{\x\in\Xcal} f_t(\x)$ represents the minimizer set at the $t$-th iteration. 
\item
Bregman divergence $B_f(\x,\y)$ is defined by $ B_f(\x,\y) := f(\x) - f(\y) - \lrangle{\nabla f(\y), \x - \y}$ for any function $f$.  
\end{itemize}

In the paper, functions $\{f_t\}_{t=1}^T$ are assumed to be convex and $\beta$ smooth (defined as follows).  
\begin{Definition}[$\beta$ smoothness]
\label{definition_f_t_smooth}
A function $f: \Xcal\mapsto \RR$ is $\beta$ smooth, if, for any $\x\in\Xcal$ and $\y\in\Xcal$, we have $f(\y)\le f(\x)+\lrangle{\nabla f(\x), \y-\x} + \frac{\beta}{2}\lrnorm{\y - \x}^2$.
\end{Definition} If the function $f_t$ is $\beta$ smooth, according to the definition of the Bregman divergence, we have $B_{f_t}(\x,\y) \le \frac{\beta}{2}\lrnorm{\x - \y}^2$ holds for any $\x\in\Xcal$ and $\y\in\Xcal$. The other assumptions used in the paper are presented as follows.

\begin{Assumption}[$\alpha$ strong convexity]
\label{assumption_f_t_strongly_convex}
For any $t$, the function $f_t: \Xcal\mapsto \RR$ is $\alpha $ strongly convex. That is, for any $\x\in\Xcal$ and $\y\in\Xcal$, $f_t(\y)\ge f_t(\x)+\lrangle{\nabla f_t(\x), \y-\x} + \frac{\alpha}{2}\lrnorm{\y - \x}^2$.
\end{Assumption}

\begin{Assumption}[Boundedness of gradients]
\label{assumption_bounded_gradient}
We assume  $\lrnorm{\nabla f_t(\x_t)}^2 \le G$ for any $t$.
\end{Assumption}
\begin{Assumption}[Boundedness of the domain of $\x$]
\label{assumption_bounded_distance_x}
We assume  $\lrnorm{\x_t - \x_t^{\ast}}^2\le R$ for any $t$.
\end{Assumption}

The above assumptions, i.e., Assumptions \ref{assumption_f_t_strongly_convex}-\ref{assumption_bounded_distance_x}, are the basic assumptions, which are used widely in previous researches \cite{ShalevShwartz:2012dz,Hazan2016Introduction,Duchi:2011,Zinkevich:2003}. Additionally, we make the following assumption, which is used to model the dynamic environment.

The above assumptions, i.e., Assumptions \ref{assumption_f_t_strongly_convex}-\ref{assumption_bounded_distance_x}, are the basic assumptions, which are used widely in previous researches \cite{ShalevShwartz:2012dz,Hazan2016Introduction,Duchi:2011,Zinkevich:2003}. Additionally, we make the following assumption, which allows the environment to change within a range. It is a mild assumption for many tasks such as time-serise prediction \cite{pmlr-v49-kuznetsov16,pmlr-v30-Anava13}, traffic forecasting \cite{Buch:2011:RCV}, time-varying medical image analysis \cite{4658174,Lee2009-vettr}, online recommendation \cite{Chang:2017:SR}. 

\begin{Assumption}[Boundedness of variations in the dynamic environment.]
\label{assumption_bounded_variation}
Denote $\x_{0}^\ast = \x_1$. For any $0\le i\le T$ and $0\le j\le T$, when $\lrnorm{\x_{i+1}^{\ast} - \x_i^{\ast}} >0$ and $\lrnorm{\x_{j+1}^{\ast} - \x_j^{\ast}} > 0$, there exists a constant $V\ge 1$ such that $\lrnorm{\x_{i+1}^{\ast} - \x_i^{\ast}} \le V \lrnorm{\x_{j+1}^{\ast} - \x_j^{\ast}} $.
\end{Assumption}

\subsection{Algorithm}
Recall the algorithm of the OGD. At the $t$-th iteration, it submits $\x_t$, and receives the loss function $f_t(\x_t)$. Querying the gradient of $f_t(\x_t)$, it updates $\x_t$ by using the projected gradient descent method. The details are presented in Algorithm \ref{algo_ogd}. 

Comparing with the state-of-the-art method, i.e., Algorithm \ref{algo_omgd}, OGD only requires one query of gradient for every iteration, while Algorithm \ref{algo_omgd} requires $\frac{\kappa + 1}{2}$ queries of gradient. When $\kappa$ is large, the query complexity of Algorithm \ref{algo_omgd} is much higher than OGD. Comparing with OMGD, i.e., Algorithm \ref{algo_omgd}, our new theoretical analysis framework shows that \textbf{OGD is good enough to recover the state-of-the-art dyanmic regret yielded by OMGD, but it only leads to $\Ocal\lrincir{1}$ query of gradient, instead of $\Ocal\lrincir{\kappa}$ queries of gradient required by OMGD.}

\begin{algorithm}[!t]
   \caption{OGD: Online Gradient Descent.}
   \label{algo_ogd}
   \begin{algorithmic}[1]
   \Require The learning rate $\eta_t$ with $1\le t\le T$.
       \For {$t=1,2, ..., T$}
           \State Submit $\x_t\in\Xcal$ and receive the function $f_t$ with $f_t: \mathcal{X}\mapsto \mathbb{R}$.
            \State Query the gradient $\nabla f_t(\x_t)$ of $f_t$.  
           \State $\x_{t+1} = \Pi_{\Xcal}\left(  \x_t - \eta_t \nabla f_t(\x_t)  \right)$. 
       \EndFor
       \Return $\x_{T+1}$
   \end{algorithmic}
\end{algorithm}

\begin{algorithm}[!t]
   \caption{OMGD: Online Multiple Gradient Descent \cite{Zhang:2016wl}. }
   \label{algo_omgd}
   \begin{algorithmic}[1]
   \Require The learning rate $\eta_t$ with $1\le t\le T$.
       \For {$t=1,2, ..., T$}
           \State Submit $\x_t\in\Xcal$ and receive the function $f_t$ with $f_t: \mathcal{X}\mapsto \mathbb{R}$.
           \State $\z_t^{(1)} = \x_t$, and $K = \frac{\kappa + 1}{2}$.
           \For {$j = 1, 2, ..., K$}
            \State Query the gradient $\nabla f_t(\z_t^{(j)})$ of $f_t$.  
           \State $\z_t^{(j+1)} = \Pi_{\Xcal}\left(  \z_t^{(j)} - \eta_t \nabla f_t(\z_t^{(j)})  \right)$. 
           \EndFor
           \State $\x_{t+1} = \z_t^{(K+1)}$.
       \EndFor
       
       \Return $\x_{T+1}$
   \end{algorithmic}
\end{algorithm}

\section{A new theoretical analysis framework}
\label{sect_theoretical_analysis}

In the section, we first provide a modular analysis framework, which does not depend on the assumption on the functions. Then, equipped with the strongly convex assumption, it yields specific results. 

\subsection{High-level thought}

Our original goal is equivalent to investigate whether the basic OGD, i.e., Algorithm \ref{algo_ogd} can obtain the state-of-the-art dynamic regret, i.e., $\min\{\Pcal_T^{\ast}, \Scal_T^{\ast}\}$.  Using the divide-and-control strategy, we divide the dynamic regret of OGD into two parts. 
\begin{enumerate}
\item The first part, denoted by $R^{\mathrm{o}}_T$, is caused by the online setting in the dynamic environment. It does not depend on the strongly convex assumption on the function $f_t$. 
\item The second part, denoted by $R^{\mathrm{m}}_T$, is due to the projected gradient descent step in Algorithm \ref{algo_ogd}. It depends on the assumption on the function $f_t$ such as convexity or strong convexity. 
\end{enumerate}

In the paper, our first contribution is to provide an upper bound of $R^{\mathrm{o}}_T$ without the strongly convex assumption of $f_t$. Then, benefiting from the rich theoretical tools in the static optimization, we successfully bound $R^{\mathrm{m}}_T $ by using the strongly convex assumption of $f_t$.

\subsection{Meta framework}
Generally, the dynamic regret of OGD is bounded as follows. 

\begin{Theorem}
\label{theorem_high_level_regret}
For any $\eta_t >0$ in Algorithm \ref{algo_ogd}, the dynamic regret of OGD defined in \eqref{eq:regret_dy} is bounded by 
\begin{align}
\nonumber
R_T^{\ast} \le R^{\mathrm{o}}_T + R^{\mathrm{m}}_T
\end{align}  where \begin{align}
\nonumber
R^{\mathrm{o}}_T := \sum\limits_{t=1}^{T} \frac{1}{2\eta_t} \lrincir{  -  \lrnorm{\x_t^{\ast} - \x_{t+1}}^2   + \lrnorm { \x_t^{\ast} - \x_t }^2 }
\end{align} and 
\begin{align}
\nonumber
R^{\mathrm{m}}_T := & \sum\limits_{t=1}^{T} \frac{1}{\eta_t} \lrincir{ -B_{\eta_t f_t}(\x_t^{\ast}, \x_t) + \eta_t (f_t(\x_t) - f_t(\x_{t+1}))}  + \sum\limits_{t=1}^{T} \frac{1}{\eta_t} \lrincir{\frac{\beta \eta_t - 1}{2}\lrnorm{\x_{t+1} - \x_t}^2}.
\end{align}
\end{Theorem}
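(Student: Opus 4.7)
The plan is to derive a per-round inequality for $f_t(\x_t) - f_t(\x_t^{\ast})$ and then sum over $t$. The natural entry point is the Bregman-divergence identity
\[
f_t(\x_t) - f_t(\x_t^{\ast}) = \lrangle{\nabla f_t(\x_t), \x_t - \x_t^{\ast}} - B_{f_t}(\x_t^{\ast}, \x_t),
\]
which follows from the definition of $B_{f_t}$ alone and requires nothing beyond differentiability. The key algebraic move is to route the inner product through the next iterate $\x_{t+1}$ via the splitting $\lrangle{\nabla f_t(\x_t), \x_t - \x_t^{\ast}} = \lrangle{\nabla f_t(\x_t), \x_{t+1} - \x_t^{\ast}} + \lrangle{\nabla f_t(\x_t), \x_t - \x_{t+1}}$, and then to control the two resulting pieces by two distinct mechanisms.

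For the piece $\lrangle{\nabla f_t(\x_t), \x_{t+1} - \x_t^{\ast}}$ I would invoke the first-order optimality condition of the projection step in Algorithm \ref{algo_ogd}: since $\x_{t+1} = \Pi_{\Xcal}(\x_t - \eta_t \nabla f_t(\x_t))$ and $\x_t^{\ast} \in \Xcal$, the variational inequality gives $\lrangle{\x_t - \eta_t \nabla f_t(\x_t) - \x_{t+1},\, \x_t^{\ast} - \x_{t+1}} \le 0$. Combined with the three-point identity $2\lrangle{\x_{t+1} - \x_t, \x_t^{\ast} - \x_{t+1}} = \lrnorm{\x_t^{\ast} - \x_t}^2 - \lrnorm{\x_t^{\ast} - \x_{t+1}}^2 - \lrnorm{\x_{t+1} - \x_t}^2$, this yields exactly the quadratic differences that populate $R^{\mathrm{o}}_T$, together with a $-\frac{1}{2\eta_t}\lrnorm{\x_{t+1} - \x_t}^2$ residual. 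For the piece $\lrangle{\nabla f_t(\x_t), \x_t - \x_{t+1}}$, the $\beta$-smoothness hypothesis directly gives $\lrangle{\nabla f_t(\x_t), \x_t - \x_{t+1}} \le f_t(\x_t) - f_t(\x_{t+1}) + \frac{\beta}{2}\lrnorm{\x_{t+1} - \x_t}^2$.

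Adding the two bounds, subtracting $B_{f_t}(\x_t^{\ast}, \x_t)$ per the Bregman identity, using the homogeneity $B_{\eta_t f_t}(\cdot,\cdot) = \eta_t B_{f_t}(\cdot,\cdot)$, and collecting the $\lrnorm{\x_{t+1} - \x_t}^2$ contributions with net coefficient $\frac{\beta\eta_t - 1}{2\eta_t}$ reproduces the per-round summands of $R^{\mathrm{o}}_T + R^{\mathrm{m}}_T$ verbatim, after which summation over $t = 1, \ldots, T$ finishes the proof. The derivation is essentially mechanical, so I do not anticipate a genuine obstacle; the only subtlety worth flagging is the decision to route the inner product through $\x_{t+1}$ rather than to expand $\lrnorm{\x_{t+1} - \x_t^{\ast}}^2$ directly by the textbook descent argument. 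The latter would leave a $\frac{\eta_t}{2}\lrnorm{\nabla f_t(\x_t)}^2$ term which, under the subsequent strong-convexity analysis, is precisely what forces the $\Ocal(\kappa)$ query blow-up of Algorithm \ref{algo_omgd}; by instead retaining the telescoping residual $f_t(\x_t) - f_t(\x_{t+1})$, the present decomposition is what later permits the $\Ocal(1)$ query guarantee advertised in Table \ref{table_dynamic_regret}.
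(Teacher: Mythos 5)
Your proposal is correct and follows essentially the same route as the paper: the same Bregman identity, the same splitting of $\lrangle{\nabla f_t(\x_t), \x_t - \x_t^{\ast}}$ through $\x_{t+1}$, the projection optimality condition plus the three-point identity for the $I_1$ piece (which is exactly the content of the paper's Lemma \ref{lemma_I1_before}, though you reach it via the standard variational inequality rather than the paper's auxiliary-function limiting argument), and $\beta$-smoothness for the $I_2$ piece. The term collection matches the theorem verbatim, so no gap remains.
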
 

In Theorem \ref{theorem_high_level_regret}, $R^{\mathrm{o}}_T$ represents the regret due to the online setting, and $R^{\mathrm{m}}_T$ represents the regret due to the projected gradient descent updating step in Algorithm \ref{algo_ogd}. 
\begin{Remark}
Note that the upper bound of $R^{\mathrm{m}}_T$ depends on the strongly convex assumption of the function $f_t$.  
\end{Remark}

\begin{Theorem}
\label{theorem_recurrsive_bound}
Use Assumption \ref{assumption_bounded_variation}, and set  $\eta_t>0$ in Algorithm \ref{algo_ogd}. Denote $\x_0^{\ast} = \x_1$ and $\eta_{\min} = \min\{\eta_1, ..., \eta_T\}$. For any  $0 < \rho \le 1$, the regret due to the online setting, i.e., $R^{\mathrm{o}}_T$ is bounded by 
\begin{align}
\nonumber
& R^{\mathrm{o}}_T   \le   \frac{1-\rho + 2\rho V}{2\eta_{\min}(1-\rho)} \Scal_T^{\ast} + \frac{1}{2\eta_1}\lrnorm{\x_1^{\ast} - \x_1}^2  + \frac{1}{2}\lrincir{\sum\limits_{t=1}^{T-1} \lrincir{\frac{1}{\eta_{t+1}} - \frac{1}{\eta_t}} \lrnorm { \x_{t+1}^{\ast} - \x_{t+1} }^2    }.
\end{align} 
\end{Theorem}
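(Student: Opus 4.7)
The plan is a three-stage decomposition of $R^{\mathrm{o}}_T$: a telescoping rearrangement, a Young-style split parameterized by $\rho$, and a final absorption of residual distance-to-optimum terms using Assumption~\ref{assumption_bounded_variation}. First, I would peel off the $t=1$ contribution $\frac{1}{2\eta_1}\|\x_1^\ast-\x_1\|^2$ (which already matches one piece of the target bound), drop the nonpositive tail $-\|\x_T^\ast-\x_{T+1}\|^2/(2\eta_T)$, and reindex the remaining negative sum by $s=t+1$. The surviving pairs take the form $\frac{1}{2\eta_s}\|\x_s^\ast-\x_s\|^2 - \frac{1}{2\eta_{s-1}}\|\x_{s-1}^\ast-\x_s\|^2$ for $s\ge 2$.

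Next, at each $s\ge 2$ I would apply Young's inequality in the lower form $\|a+b\|^2 \ge (1-\rho)\|a\|^2 - \frac{1-\rho}{\rho}\|b\|^2$ with $a=\x_s^\ast-\x_s$ and $b=\x_{s-1}^\ast-\x_s^\ast$, obtaining
\begin{align*}
-\|\x_{s-1}^\ast-\x_s\|^2 \le -(1-\rho)\|\x_s^\ast-\x_s\|^2 + \frac{1-\rho}{\rho}\|\x_{s-1}^\ast-\x_s^\ast\|^2.
\end{align*}
Collecting coefficients, the weight of $\|\x_s^\ast-\x_s\|^2$ becomes $\frac{1}{2\eta_s}-\frac{1-\rho}{2\eta_{s-1}} = \bigl(\frac{1}{2\eta_s}-\frac{1}{2\eta_{s-1}}\bigr) + \frac{\rho}{2\eta_{s-1}}$. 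The first piece, after reindexing with $t=s-1$, reproduces exactly the $\frac{1}{2}\sum_t\bigl(\frac{1}{\eta_{t+1}}-\frac{1}{\eta_t}\bigr)\|\x_{t+1}^\ast-\x_{t+1}\|^2$ term in the target bound, and uniformly bounding $\eta_{s-1}^{-1} \le \eta_{\min}^{-1}$ on the variation sum produces a $\frac{1-\rho}{2\eta_{\min}\rho}\Scal_T^\ast$ contribution.

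The delicate remaining step — and the main obstacle — is absorbing the leftover $\sum_{s=2}^T \frac{\rho}{2\eta_{s-1}}\|\x_s^\ast-\x_s\|^2$ into the $\Scal_T^\ast$ coefficient; this is exactly where Assumption~\ref{assumption_bounded_variation} enters. The convention $\x_0^\ast=\x_1$ makes $\|\x_1^\ast-\x_1\|$ itself a variation, so every nonzero successive optimum-jump is within factor $V$ of any other. Combined with non-expansiveness of the projected gradient step on a $\beta$-smooth function (which yields $\|\x_{s-1}^\ast-\x_s\|\le\|\x_{s-1}^\ast-\x_{s-1}\|$ for $\eta_{s-1}$ below the standard threshold), each $\|\x_s^\ast-\x_s\|^2$ telescopes back to a controlled multiple of a single variation square, producing the extra factor $2\rho V/(1-\rho)$ needed to upgrade $\frac{1-\rho}{2\eta_{\min}\rho}$ into the stated coefficient $\frac{1-\rho+2\rho V}{2\eta_{\min}(1-\rho)}$. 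The principal care points are handling zero-variation edge cases in Assumption~\ref{assumption_bounded_variation} consistently and keeping the absorbed factor free of $T$-dependence, so that the bound scales only with $\Scal_T^\ast$.
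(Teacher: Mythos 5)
Your overall architecture --- Abel summation to peel off $\frac{1}{2\eta_1}\lrnorm{\x_1^{\ast}-\x_1}^2$ and the $\sum_t\lrincir{\frac{1}{\eta_{t+1}}-\frac{1}{\eta_t}}\lrnorm{\x_{t+1}^{\ast}-\x_{t+1}}^2$ piece, then a per-index comparison of $-\lrnorm{\x_{s-1}^{\ast}-\x_s}^2+\lrnorm{\x_s^{\ast}-\x_s}^2$ against $M_s^2:=\lrnorm{\x_s^{\ast}-\x_{s-1}^{\ast}}^2$ using the contraction recursion and Assumption \ref{assumption_bounded_variation} --- matches the paper. The genuine gap is in the per-index step. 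Fixing the Young parameter at $\rho$ leaves (i) a term $\frac{1-\rho}{\rho}M_s^2$, whose coefficient diverges as $\rho\to 0$ while the target coefficient $\frac{1-\rho+2\rho V}{1-\rho}$ tends to $1$, and (ii) a residual $\rho A_s^2$ with $A_s:=\lrnorm{\x_s-\x_s^{\ast}}$. Absorbing (ii) requires $A_s\le c\,M_s$; unrolling $A_s\le \rho A_{s-1}+M_s$ with Assumption \ref{assumption_bounded_variation} gives at best $c=\frac{1-\rho+\rho V}{1-\rho}$, so the residual costs $\rho c^2 M_s^2=\Ocal\lrincir{\rho V^2/(1-\rho)^2}M_s^2$ --- quadratic in $V$ --- and your bookkeeping claim fails: $\frac{1-\rho}{\rho}+\frac{2\rho V}{1-\rho}=\frac{1-\rho+2\rho V}{1-\rho}$ only at $\rho=\frac12$. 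Moreover the residual $\rho A_s^2$ survives even when $M_s=0$ and then cannot be charged to $M_s^2$ at all, so the zero-variation edge case you flag is not merely a care point but an actual obstruction to your absorption. As written you would obtain a bound of the same order in $\Scal_T^{\ast}$ but with a strictly different (and in the regimes $\rho\to 0$ or $V$ large, strictly worse) constant, not the stated one.

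The repair is what the paper does: do not split with a fixed Young parameter. Use the exact cosine expansion $-\lrnorm{\x_{s-1}^{\ast}-\x_s}^2+\lrnorm{\x_s^{\ast}-\x_s}^2\le 2A_sM_s-M_s^2$, keeping the cross term \emph{linear} in $A_s$, then substitute the unrolled recursion $A_s\le M_s+\sum_{i\ge 1}\rho^i M_{s-i}\le\lrincir{1+\frac{\rho V}{1-\rho}}M_s$ to get exactly $\frac{1-\rho+2\rho V}{1-\rho}M_s^2$ (equivalently, Young with the optimal parameter $1/c$ rather than $\rho$); when $M_s=0$ the left-hand side is $\le 0$ directly. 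One further remark: your appeal to non-expansiveness of the projected gradient step silently requires $\eta_t\le 1/\beta$ and (for $\rho<1$) strong convexity, neither of which appears in the theorem's hypotheses --- but the paper's own proof invokes Lemma \ref{lemma_linear_x} under the same unstated conditions, so this defect is shared rather than introduced by you.
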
 

\begin{Remark}
Note that this upper bound of $R^{\mathrm{o}}_T$ does not depend on the strongly convex assumption of the function $f_t$. It still holds for the convex function $f_t$. 
\end{Remark}

\begin{Lemma}[Appeared in Proposition $2$ in \cite{Mokhtari:2016jz}]
\label{lemma_linear_x}
Use Assumption \ref{assumption_f_t_strongly_convex}. Let $\v_{t+1} = \Pi_{\Xcal} \lrincir{\v_t - \eta_t \nabla f_t(\v_t)}$ and $\Xcal_t^{\ast} := \argmin_{\v\in\Xcal} f_t(\v)$. Denote $\kappa = \frac{\beta}{\alpha}$. If $\eta_t \le \frac{1}{\beta}$ and $\rho = \sqrt{\frac{\kappa - 1}{\kappa}}$, we have $\lrnorm{\v_{t+1} - \x_t^{\ast} }    \le  \rho    \lrnorm{\v_t - \x_t^{\ast} }$.
\end{Lemma}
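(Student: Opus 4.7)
The plan is a short three-step argument that isolates one structural fact per step, following the standard contraction analysis of the projected gradient map. First I would absorb the projection: because $\Pi_{\Xcal}$ is non-expansive and because the first-order optimality conditions at $\x_t^{\ast}\in\Xcal_t^{\ast}$ imply the fixed-point identity $\x_t^{\ast} = \Pi_{\Xcal}(\x_t^{\ast} - \eta_t \nabla f_t(\x_t^{\ast}))$ for every $\eta_t > 0$, I can write
\begin{align*}
\lrnorm{\v_{t+1} - \x_t^{\ast}}^2 \le \lrnorm{(\v_t - \eta_t \nabla f_t(\v_t)) - (\x_t^{\ast} - \eta_t \nabla f_t(\x_t^{\ast}))}^2 = \lrnorm{\h - \eta_t \g}^2,
\end{align*}
where I have set $\h := \v_t - \x_t^{\ast}$ and $\g := \nabla f_t(\v_t) - \nabla f_t(\x_t^{\ast})$. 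This reduces the constrained analysis to a purely algebraic estimate on the gradient mapping, independent of whether $\x_t^{\ast}$ lies on the boundary of $\Xcal$.

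Second, I would expand the square and combine two quantitative properties of $f_t$. Assumption~\ref{assumption_f_t_strongly_convex} yields the lower bound $\lrangle{\h, \g} \ge \alpha \lrnorm{\h}^2$, while Definition~\ref{definition_f_t_smooth} together with convexity of $f_t$ yields the Baillon--Haddad co-coercivity bound $\lrnorm{\g}^2 \le \beta \lrangle{\h, \g}$. Substituting the latter into $\eta_t^2 \lrnorm{\g}^2$ and using the step-size hypothesis $\eta_t \beta \le 1$ collapses both the cross term and the squared gradient at once:
\begin{align*}
\lrnorm{\h - \eta_t \g}^2 = \lrnorm{\h}^2 - 2\eta_t \lrangle{\h, \g} + \eta_t^2 \lrnorm{\g}^2 \le \lrnorm{\h}^2 - 2\eta_t \lrangle{\h, \g} + \eta_t \lrangle{\h, \g} = \lrnorm{\h}^2 - \eta_t \lrangle{\h, \g}.
\end{align*}
Plugging in the strong convexity bound yields $\lrnorm{\v_{t+1} - \x_t^{\ast}}^2 \le (1 - \eta_t \alpha) \lrnorm{\h}^2$, and at the sharpest admissible step $\eta_t = 1/\beta$ this becomes $(1 - 1/\kappa) \lrnorm{\h}^2 = \rho^2 \lrnorm{\h}^2$. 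Taking square roots gives the claim.

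The only step that is not purely mechanical is the co-coercivity bound $\lrnorm{\g}^2 \le \beta \lrangle{\h, \g}$, which requires convexity of $f_t$ in addition to $\beta$-smoothness. If I wanted to avoid a direct citation, I would derive it in place by writing $\lrangle{\h, \g} = B_{f_t}(\v_t, \x_t^{\ast}) + B_{f_t}(\x_t^{\ast}, \v_t)$ and applying the identity $B_{f_t}(\x, \y) \ge \frac{1}{2\beta}\lrnorm{\nabla f_t(\x) - \nabla f_t(\y)}^2$, which itself follows from minimising the smoothness upper bound on $\z \mapsto f_t(\z) - \lrangle{\nabla f_t(\y), \z}$. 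This derivation is global and never interacts with the constraint set, so no special treatment of boundary points is required. Notably, no ingredient of the dynamic environment (in particular, Assumption~\ref{assumption_bounded_variation}) enters the argument---the contraction is a purely per-round, local property of the projected gradient step, which is precisely why it can be re-used as a black box inside the overall dynamic regret analysis.
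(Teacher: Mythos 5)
The paper itself does not prove this lemma---it is imported as Proposition~2 of \cite{Mokhtari:2016jz}---so there is no in-paper argument to compare against. Your proof is the standard contraction analysis of the projected gradient map, and each ingredient is correctly deployed: the fixed-point identity $\x_t^{\ast}=\Pi_{\Xcal}(\x_t^{\ast}-\eta_t\nabla f_t(\x_t^{\ast}))$ is exactly the first-order optimality condition for the constrained minimizer, non-expansiveness of $\Pi_{\Xcal}$ reduces everything to the unconstrained gradient map, and the monotonicity bound $\lrangle{\h,\g}\ge\alpha\lrnorm{\h}^2$ together with the Baillon--Haddad co-coercivity bound $\lrnorm{\g}^2\le\beta\lrangle{\h,\g}$ (which does require convexity on top of smoothness, as you note, and your in-place derivation of it is the right one) gives $\lrnorm{\v_{t+1}-\x_t^{\ast}}^2\le(1-\eta_t\alpha)\lrnorm{\v_t-\x_t^{\ast}}^2$.

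The one substantive issue is the range of $\eta_t$. Your chain delivers the contraction factor $\sqrt{1-\eta_t\alpha}$, which is \emph{decreasing} in $\eta_t$; it therefore equals $\rho=\sqrt{1-1/\kappa}$ only at the endpoint $\eta_t=1/\beta$ and is strictly larger than $\rho$ for any $\eta_t<1/\beta$. So your argument does not establish the lemma as literally stated (``if $\eta_t\le\frac{1}{\beta}$''), and this is not a repairable slip on your side: the statement with a fixed $\rho$ and an inequality constraint on $\eta_t$ is false in general. For $f_t(x)=\frac{1}{2}x^2$ on $\RR$ one has $\kappa=1$, $\rho=0$, yet $\v_{t+1}=(1-\eta_t)\v_t\ne 0=\x_t^{\ast}$ whenever $\eta_t<1$. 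Your phrase ``at the sharpest admissible step'' tacitly concedes the point; the honest conclusion of your proof is $\lrnorm{\v_{t+1}-\x_t^{\ast}}\le\sqrt{1-\eta_t\alpha}\,\lrnorm{\v_t-\x_t^{\ast}}$. This matters downstream in the paper: Theorem~\ref{theorem_s_sc_regret} invokes the lemma with $\eta_t=\frac{1}{2(\beta+\beta^2/\alpha)}<\frac{1}{\beta}$, for which the factor is $\sqrt{1-\frac{1}{2\kappa(\kappa+1)}}$ rather than $\sqrt{1-\frac{1}{\kappa}}$, so the $\frac{1}{1-\rho}$ constants there should be read with the step-size-dependent $\rho$. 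You should either state the lemma with $\rho_t=\sqrt{1-\eta_t\alpha}$ or restrict to $\eta_t=1/\beta$; everything else in your write-up is correct.
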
 According to Lemma \ref{lemma_linear_x}, when $f_t$'s are strongly convex, $0< \rho< 1 $ (See  Lemma \ref{lemma_linear_x}). When $f_t$'s are just convex, $\rho = 1$ (that is, $\alpha \rightarrow 0$). Recall that $R^{\mathrm{m}}_T$ depends on the strongly convex assumption of $f_t$'s. Equipped by Lemma \ref{lemma_linear_x},  we find that as long as $R^{\mathrm{m}}_T$ is further bounded, we are able to provide an upper bound for the dynamic regret.

\section{Improved query complexity for strongly convex $f_t$} 
\label{sect_dynamic_regret_strongly_convex}

When all $f_t$'s are smooth and strongly convex, the dynamic regret of our method OGD is upper bounded by the following theorem.
\begin{Theorem}
\label{theorem_s_sc_regret}
Use Assumptions \ref{assumption_f_t_strongly_convex}, \ref{assumption_bounded_gradient}, \ref{assumption_bounded_distance_x} and \ref{assumption_bounded_variation}. Setting  $\eta_t = \eta = \frac{1}{2(\beta+\beta^2/\alpha)}$ in Algorithm \ref{algo_ogd}, and $\rho = \sqrt{\frac{\kappa - 1}{\kappa}} < 1$, we bound the dynamic regret of OGD as
\begin{align}
\nonumber     
R_T^{\ast} \le \min \{J_1, J_2\}, 
\end{align} where 
\begin{align}
\nonumber
J_1 = & \frac{(1-\rho + 2\rho V)\lrincir{ \beta + \frac{\beta^2}{\alpha} }}{1-\rho} \Scal_T^{\ast}   + \lrincir{ \beta + \frac{\beta^2}{\alpha} }\lrnorm{\x_1^{\ast} - \x_1}^2   + \frac{1}{ 2\lrincir{ \beta + \frac{\beta^2}{\alpha} } }\sum\limits_{t=1}^{T} \lrnorm{\nabla f_t(\x_t^{\ast})}^2 \\ \nonumber
\lesssim & \Scal_T^{\ast} + \sum\limits_{t=1}^{T} \lrnorm{\nabla f_t(\x_t^{\ast})}^2,
\end{align} and 
\begin{align}
\nonumber
J_2 = & \frac{G\lrnorm{\x_1-\x_1^{\ast}}}{1-\rho} \Pcal_T^{\ast} + \frac{G}{1-\rho} \lesssim  \Pcal_T^{\ast}.
\end{align}

\end{Theorem}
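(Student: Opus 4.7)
The plan is to prove the two branches of $\min\{J_1, J_2\}$ by separate arguments. For $J_1$, I start from Theorem \ref{theorem_high_level_regret}, which splits $R_T^\ast \le R^{\mathrm{o}}_T + R^{\mathrm{m}}_T$, and bound the two pieces independently. For $R^{\mathrm{o}}_T$, the choice $\eta = 1/(2(\beta+\beta^2/\alpha))$ satisfies $\eta \le 1/\beta$, so Lemma \ref{lemma_linear_x} applies with $\rho = \sqrt{(\kappa-1)/\kappa} < 1$. Plugging this $\rho$ into Theorem \ref{theorem_recurrsive_bound} and noting that the telescoping sum $\sum_t (1/\eta_{t+1} - 1/\eta_t)\lrnorm{\x_{t+1}^\ast - \x_{t+1}}^2$ vanishes for constant $\eta_t \equiv \eta$ produces a bound of the form
\[
R^{\mathrm{o}}_T \le \frac{1-\rho+2\rho V}{2\eta(1-\rho)}\Scal_T^\ast + \frac{1}{2\eta}\lrnorm{\x_1^\ast - \x_1}^2,
\]
which after substituting $1/(2\eta) = \beta + \beta^2/\alpha$ is already in the form required by $J_1$.

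The main technical work is controlling $R^{\mathrm{m}}_T$. My idea is to upper bound $f_t(\x_t) - f_t(\x_{t+1}) \le \lrangle{\nabla f_t(\x_t), \x_t - \x_{t+1}}$ by convexity and then apply Young's inequality $\lrangle{\a,\b} \le \frac{\lrnorm{\a}^2}{2\lambda} + \frac{\lambda \lrnorm{\b}^2}{2}$ with the tailored choice $\lambda = (1-\beta\eta)/\eta > 0$, engineered so that the produced $\lrnorm{\x_{t+1} - \x_t}^2$ term \emph{exactly cancels} the existing $\frac{\beta\eta-1}{2\eta}\lrnorm{\x_{t+1}-\x_t}^2$ term inside $R^{\mathrm{m}}_T$. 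What survives is $\sum_t \bigl[-B_{f_t}(\x_t^\ast, \x_t) + \frac{\eta}{2(1-\beta\eta)}\lrnorm{\nabla f_t(\x_t)}^2\bigr]$. I then decompose $\lrnorm{\nabla f_t(\x_t)}^2 \le 2\beta^2 \lrnorm{\x_t - \x_t^\ast}^2 + 2\lrnorm{\nabla f_t(\x_t^\ast)}^2$ via smoothness and bound $-B_{f_t}(\x_t^\ast, \x_t) \le -\frac{\alpha}{2}\lrnorm{\x_t - \x_t^\ast}^2$ via Assumption \ref{assumption_f_t_strongly_convex}. The chosen $\eta$ is calibrated so that the net coefficient of $\lrnorm{\x_t - \x_t^\ast}^2$ is nonpositive; dropping those nonpositive terms leaves $R^{\mathrm{m}}_T \lesssim \sum_t \lrnorm{\nabla f_t(\x_t^\ast)}^2$. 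Summing with the bound on $R^{\mathrm{o}}_T$ delivers $R_T^\ast \le J_1$.

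The alternative bound $R_T^\ast \le J_2 \lesssim \Pcal_T^\ast$ follows as a direct consequence of Lemma \ref{lemma_linear_x}, without invoking Theorem \ref{theorem_high_level_regret}. Combining the contraction $\lrnorm{\x_{t+1} - \x_t^\ast} \le \rho \lrnorm{\x_t - \x_t^\ast}$ with the triangle inequality yields the perturbed recursion $\lrnorm{\x_{t+1} - \x_{t+1}^\ast} \le \rho \lrnorm{\x_t - \x_t^\ast} + \lrnorm{\x_{t+1}^\ast - \x_t^\ast}$. Unrolling over $t$ and summing via $\sum_k \rho^k = 1/(1-\rho)$ gives $\sum_{t=1}^T \lrnorm{\x_t - \x_t^\ast} \le (\lrnorm{\x_1 - \x_1^\ast} + \Pcal_T^\ast)/(1-\rho)$. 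Convexity plus Assumption \ref{assumption_bounded_gradient} then yields $f_t(\x_t) - f_t(\x_t^\ast) \le \lrnorm{\nabla f_t(\x_t)} \lrnorm{\x_t - \x_t^\ast} \le \sqrt{G}\lrnorm{\x_t - \x_t^\ast}$, and summing recovers $J_2$. Taking the minimum of the two bounds completes the proof.

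The most delicate step I anticipate is the Young's-inequality calibration in the $R^{\mathrm{m}}_T$ analysis. The parameter $\lambda$ must simultaneously eliminate the $\lrnorm{\x_{t+1}-\x_t}^2$ term \emph{and} leave a gradient coefficient small enough for strong convexity to absorb the induced $\lrnorm{\x_t - \x_t^\ast}^2$ contribution. Verifying that the specific choice $\eta = 1/(2(\beta+\beta^2/\alpha))$ threads this needle --- keeping $\beta^2\eta/(1-\beta\eta)$ below $\alpha/2$ while still satisfying $\eta \le 1/\beta$ so that Lemma \ref{lemma_linear_x} remains applicable --- is the only nonroutine algebra in the proof; everything else is a standard application of smoothness, strong convexity, triangle inequality, and telescoping.
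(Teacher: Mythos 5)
Your proposal is correct and follows essentially the same route as the paper: the $J_1$ branch uses the identical decomposition $R_T^{\ast}\le R^{\mathrm{o}}_T+R^{\mathrm{m}}_T$ from Theorem \ref{theorem_high_level_regret} with Theorem \ref{theorem_recurrsive_bound} handling $R^{\mathrm{o}}_T$, and your treatment of $R^{\mathrm{m}}_T$ (convexity, then Young's inequality calibrated to cancel the $\lrnorm{\x_{t+1}-\x_t}^2$ term, then smoothness and strong convexity to absorb $\lrnorm{\x_t-\x_t^{\ast}}^2$) is the same absorption the paper packages as Lemma \ref{lemma_f_upper_bound} with $\theta_1=\alpha$, $\theta_2=2\eta_t$, your reordering costing only a harmless factor $\frac{1}{1-\beta\eta}\le 2$ on the $\sum_{t}\lrnorm{\nabla f_t(\x_t^{\ast})}^2$ coefficient relative to the exact constant displayed in $J_1$. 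Your $J_2$ argument is simply an inline re-derivation of the Mokhtari-et-al.\ path-length bound that the paper invokes as Lemma \ref{lemma_strongly_convex_regret}, so nothing is missing.
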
 

\begin{Corollary}
Suppose $\sum\limits_{t=1}^{T} \lrnorm{\nabla f_t(\x_t^{\ast})}^2 = \Ocal \lrincir{\Scal_T^{\ast}} $. According to Theorem \ref{theorem_s_sc_regret}, the dynamic regret of OGD is bounded by 
\begin{align}
\nonumber
R_T^{\ast} \le \min \{J_1, J_2\} \lesssim \min \{\Pcal_T^{\ast}, \Scal_T^{\ast} \},
\end{align} where $J_1$ and $J_2$ are defined in Theorem \ref{theorem_s_sc_regret}.

\end{Corollary}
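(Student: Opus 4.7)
The plan is to treat this as an essentially mechanical consequence of Theorem \ref{theorem_s_sc_regret}. The theorem already delivers the two symbolic upper bounds $J_1 \lesssim \Scal_T^{\ast} + \sum_{t=1}^T \lrnorm{\nabla f_t(\x_t^{\ast})}^2$ and $J_2 \lesssim \Pcal_T^{\ast}$, so no new analytic work is needed; I only have to fold in the hypothesis of the corollary and take a minimum.

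Concretely, first I would restate $R_T^{\ast} \le \min\{J_1, J_2\}$ from Theorem \ref{theorem_s_sc_regret}. Then I would substitute the assumption $\sum_{t=1}^T \lrnorm{\nabla f_t(\x_t^{\ast})}^2 = \Ocal(\Scal_T^{\ast})$ into the expression for $J_1$; after collecting the $\Ocal(1)$ prefactors (which depend on $\alpha, \beta, V, \rho$ and the constant $\lrnorm{\x_1^{\ast} - \x_1}$, all of which are independent of $T$), this gives $J_1 \lesssim \Scal_T^{\ast}$. The bound $J_2 \lesssim \Pcal_T^{\ast}$ is already in the statement of Theorem \ref{theorem_s_sc_regret}. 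Taking the pointwise minimum of the two simplified bounds yields
\begin{align}
\nonumber
R_T^{\ast} \le \min\{J_1, J_2\} \lesssim \min\{\Scal_T^{\ast}, \Pcal_T^{\ast}\},
\end{align}
which is the claim.

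There is really no obstacle here, since the corollary is a one-line reduction. The only thing worth being careful about is the bookkeeping of the hidden constants in $\lesssim$: I should make it explicit that the condition-number and smoothness factors $\beta + \beta^2/\alpha$, the $V$-dependent factor $\tfrac{1-\rho+2\rho V}{1-\rho}$, the initialization term $\lrnorm{\x_1^{\ast}-\x_1}^2$, and the constant absorbed from the $\Ocal(\Scal_T^{\ast})$ hypothesis are all independent of $T$, so that they can legitimately be swallowed into the $\lesssim$ notation. Once that is observed, the corollary follows directly.
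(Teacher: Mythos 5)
Your proposal is correct and follows essentially the same route as the paper's own proof: substitute the hypothesis $\sum_{t=1}^{T}\lrnorm{\nabla f_t(\x_t^{\ast})}^2 = \Ocal(\Scal_T^{\ast})$ into $J_1$ to get $J_1 \lesssim \Scal_T^{\ast}$, observe $J_2 \lesssim \Pcal_T^{\ast}$, and take the minimum, with the $T$-independent prefactors (including $\lrnorm{\x_1^{\ast}-\x_1}^2 \le R$ from Assumption \ref{assumption_bounded_distance_x}) absorbed into the $\lesssim$. No gaps.
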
 
\begin{proof}
Recall Assumption \ref{assumption_bounded_distance_x}, and we have $\lrnorm{\x_1^{\ast} - \x_1}^2  \le R$. When $\sum\limits_{t=1}^{T} \lrnorm{\nabla f_t(\x_t^{\ast})}^2 = \Ocal \lrincir{\Scal_T^{\ast}} $, we have $J_1 \lesssim \Scal_T^{\ast}$. Similarly, we have $J_2\le \frac{G\sqrt{R}}{1-\rho} \Pcal_T^{\ast} + \frac{G}{1-\rho} \lesssim \Pcal_T^{\ast}$. Thus, we finally obtain
\begin{align}
\nonumber
R_T^{\ast} \le \min \{J_1, J_2\} \lesssim \min\{\Pcal_T^{\ast}, \Scal_T^{\ast} \}.
\end{align} It completes the proof.

\end{proof}

Recall the previous method, i.e., Algorithm \ref{algo_omgd}. Its dynamic regret has been proved, and we present it as follows.
\begin{Lemma}[Appeared in Theorem $3$ and Corollary $4$ in \cite{Zhang:2016wl}.]
\label{lemma_previous_result_sc_regret}
Use Assumptions \ref{assumption_f_t_strongly_convex}, \ref{assumption_bounded_gradient}, and \ref{assumption_bounded_distance_x}, and choose $\eta_t \le \frac{1}{\beta}$ in Algorithm \ref{algo_omgd}. Denote the dynamic regret of Algorithm \ref{algo_omgd} by $\tilde{R}_T^{\ast}$. Then, for any constant $\sigma>0$, $\tilde{R}_T^{\ast}$ is bounded by 
\begin{align}
\nonumber
\tilde{R}_T^{\ast} \le \min\{J_3, J_4\},
\end{align} where 
\begin{align}
\nonumber
J_3 = & 2G \Pcal_T^{\ast} + 2G \lrnorm{\x_1 - \x_1^\ast} \lesssim \Pcal_T^{\ast}, \\ \nonumber
J_4 = & \frac{1}{2\sigma} \sum_{t=1}^T \lrnorm{\nabla f_t(\x_t^\ast)}^2 +(\beta + \sigma) \lrincir{ 2\Scal_T^{\ast}  +  \lrnorm{\x_1 - \x_1^\ast}^2 } \\ \nonumber
\lesssim & \Scal_T^{\ast} + \sum_{t=1}^T \lrnorm{\nabla f_t(\x_t^\ast)}^2.
\end{align} Furthermore, suppose $\sum\limits_{t=1}^{T} \lrnorm{\nabla f_t(\x_t^{\ast})}^2 = \Ocal \lrincir{\Scal_T^{\ast}} $, and we thus have $\tilde{R}_T^{\ast} \lesssim  \min \{ \Pcal_T^{\ast}, \Scal_T^{\ast} \}$.
\end{Lemma}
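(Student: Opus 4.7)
The plan is to convert the $K=\tfrac{\kappa+1}{2}$ inner gradient steps of Algorithm~\ref{algo_omgd} into a single effective contraction toward $\x_t^{\ast}$, and then to derive the path-length bound $J_3$ and the squared-path-length bound $J_4$ in parallel from this contraction; since both hold simultaneously, $\tilde R_T^{\ast}\le\min\{J_3,J_4\}$ follows at once.

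Since $\eta_t\le 1/\beta$, a $K$-fold application of Lemma~\ref{lemma_linear_x} to the inner loop gives $\lrnorm{\x_{t+1}-\x_t^{\ast}}=\lrnorm{\z_t^{(K+1)}-\x_t^{\ast}}\le\rho^{K}\lrnorm{\x_t-\x_t^{\ast}}$, where $\rho=\sqrt{(\kappa-1)/\kappa}$. A direct estimate based on $(1-1/\kappa)^{\kappa}\le 1/e$ shows that $q:=\rho^{K}$ is a constant strictly less than $1$; this is precisely where the inflated query count $K=\tfrac{\kappa+1}{2}$ pays off. Combining this with the triangle inequality yields the workhorse recurrence
\[
\lrnorm{\x_{t+1}-\x_{t+1}^{\ast}}\le q\,\lrnorm{\x_t-\x_t^{\ast}}+\lrnorm{\x_{t+1}^{\ast}-\x_t^{\ast}},
\]
which is the engine of both bounds.

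For $J_3$ I would use the Lipschitz continuity of $f_t$ (with constant of order $\sqrt{G}$ from Assumption~\ref{assumption_bounded_gradient}) to get $f_t(\x_t)-f_t(\x_t^{\ast})\le\sqrt{G}\lrnorm{\x_t-\x_t^{\ast}}$, and then sum the workhorse recurrence geometrically to obtain $\sum_{t=1}^{T}\lrnorm{\x_t-\x_t^{\ast}}\le\frac{1}{1-q}\lrincir{\lrnorm{\x_1-\x_1^{\ast}}+\Pcal_T^{\ast}}$; collecting constants gives $J_3$. For $J_4$ I would instead combine $\beta$-smoothness at $\x_t^\ast$ with Young's inequality (free parameter $\sigma$) to obtain
\[
f_t(\x_t)-f_t(\x_t^{\ast})\le\tfrac{1}{2\sigma}\lrnorm{\nabla f_t(\x_t^{\ast})}^2+\tfrac{\beta+\sigma}{2}\lrnorm{\x_t-\x_t^{\ast}}^2,
\]
and then square the workhorse recurrence using the weighted inequality $(a+b)^2\le(1+\lambda)a^2+(1+1/\lambda)b^2$, choosing $\lambda$ so that $(1+\lambda)q^2<1$ (possible for any $q<1$); the resulting telescoping geometric sum yields $\sum_t\lrnorm{\x_t-\x_t^{\ast}}^2\lesssim\lrnorm{\x_1-\x_1^{\ast}}^2+\Scal_T^{\ast}$, delivering $J_4$.

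The main obstacle is the squared-recurrence step: the naive bound $(a+b)^2\le 2a^2+2b^2$ would require $2q^2<1$, which $K=\tfrac{\kappa+1}{2}$ does \emph{not} quite guarantee, so the weighted inequality with a carefully tuned $\lambda$ is essential. Coordinating $\lambda$ with Young's parameter $\sigma$ so that the multiplicative constants collapse into the clean form stated in $J_4$ is the only real bookkeeping burden. The ``furthermore'' clause is then immediate: substituting the hypothesis $\sum_t\lrnorm{\nabla f_t(\x_t^{\ast})}^2=\Ocal(\Scal_T^{\ast})$ into $J_4$ and invoking Assumption~\ref{assumption_bounded_distance_x} to absorb the initial slack gives $\tilde R_T^{\ast}\lesssim\min\{\Pcal_T^{\ast},\Scal_T^{\ast}\}$.
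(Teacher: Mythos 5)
This lemma is imported verbatim from Theorem~3 and Corollary~4 of \cite{Zhang:2016wl}; the paper itself offers no proof of it, so there is nothing internal to compare against. Your reconstruction is sound and follows the same route as the cited source: contract the $K=\tfrac{\kappa+1}{2}$ inner steps into a single factor $q=\rho^{K}\le e^{-1/4}<1$, derive the recurrence $\lrnorm{\x_{t+1}-\x_{t+1}^{\ast}}\le q\lrnorm{\x_t-\x_t^{\ast}}+\lrnorm{\x_{t+1}^{\ast}-\x_t^{\ast}}$, and unroll it once linearly (for $J_3$, via convexity and Assumption~\ref{assumption_bounded_gradient}) and once squared with a weighted Young inequality (for $J_4$, via $\beta$-smoothness at $\x_t^{\ast}$). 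Your observation that the naive $(a+b)^2\le 2a^2+2b^2$ fails because $2q^2$ need not be below $1$, and that a tuned $(1+\lambda)$-weighting is required, is exactly right and is the one genuinely delicate point. The only caveat is that this route recovers the stated bounds with $\frac{1}{1-q}\approx 4.5$ and $\frac{1}{1-(1+\lambda)q^2}$ in place of the literal constants $2G$ and $2\Scal_T^{\ast}$ (and with $\sqrt{G}$ rather than $G$ under this paper's convention $\lrnorm{\nabla f_t(\x_t)}^2\le G$); those exact constants come from the slightly different per-step contraction lemma used in \cite{Zhang:2016wl}, but the asymptotic conclusions $J_3\lesssim\Pcal_T^{\ast}$, $J_4\lesssim\Scal_T^{\ast}+\sum_t\lrnorm{\nabla f_t(\x_t^{\ast})}^2$, and the final $\min$ bound are all correctly established by your argument.
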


Comparing with Lemma \ref{lemma_previous_result_sc_regret}, our new result achieves the same bound of the regret. But, OGD, i.e., Algorithm \ref{algo_ogd}, only requires one query of gradient for every iteration, which does not depend on $\kappa$, and thus outperforms Algorithm \ref{algo_omgd} by reduing the query complexity significantly. The following remarks hightlight the advantages of our analysis framework.
\begin{Remark}
Our analysis framework achieves the state-of-the-art dynamic regret presented in \cite{Zhang:2016wl} with a constant factor, and outperforms the dynamic regret $\Ocal(\Pcal_T^{\ast})$ presented in \cite{Mokhtari:2016jz}. 
\end{Remark}
\begin{Remark}
Our analysis framework shows that $\Ocal(1)$ queries of gradients for every iteration is enough to achieve the state-of-the-art dynamic regret, but \cite{Zhang:2016wl} requires $\Ocal(\kappa)$ queries of gradients for every iteration.
\end{Remark}

\section{Conclusion}
\label{sect_conclude}
We provide a new theoretical analysis framework to analyze the regret and query complexity of OGD in the dynamic environment. Comparing with the previous work, our framework achieves the state-of-the-art dynamic regret, and improve the required queries of gradient to be $\Ocal(1)$.

\appendix

\section*{Proof of theorems.}

\textbf{Proof of Theorem \ref{theorem_high_level_regret}:}

\begin{proof}
\begin{align}
\nonumber
 R_T^{\ast}  =& \sum\limits_{t=1}^T \frac{1}{\eta_t} \lrincir{\eta_t f_t(\x_t) -  \eta_t f_t(\x_t^{\ast})} \\ \label{equa_meta_dynamic_regret}
=& \sum_{t=1}^T\frac{1}{\eta_t} \lrincir{ \underbrace{\langle \eta_t\nabla f_t(\x_t), \x_{t+1}-\x_t^{\ast}\rangle}_{I_1} -  B_{\eta_t f_t}(\x_t^{\ast}, \x_t) }  + \sum_{t=1}^T\frac{1}{\eta_t} \lrincir{ \underbrace{\langle \eta_t\nabla f_t(\x_t), \x_t-\x_{t+1} \rangle}_{I_2} }.
\end{align}

Now, we begin to bound $I_1$. According to Lemma \ref{lemma_I1_before}, we obtain
\begin{align}
\label{equa_upper_bound_I1}
I_1  \le  \frac{1}{2}\lrincir{-  \lrnorm{\x_t^{\ast} - \x_{t+1}}^2   + \lrnorm { \x_t^{\ast} - \x_t }^2 - \lrnorm{\x_{t+1} - \x_t}^2}.
\end{align}

After that, we begin to bound  $I_2$. 
\begin{align}
\nonumber
I_2 &= \langle \eta_t\nabla f_t(\x_t), \x_t-\x_{t+1} \rangle \\ \nonumber
&= \eta_t f_t(\x_t) - \eta_t f_t(\x_{t+1}) + \eta_t B_{f_t}(\x_{t+1}, \x_t)  \\  \label{equa_upper_bound_I2}
&\le \eta_t(f_t(\x_t) - f_t(\x_{t+1})) +  \frac{\beta \eta_t}{2}\lrnorm{\x_{t+1} - \x_t}^2.
\end{align} The last inequality holds because that all $f_t$'s are $\beta$ smooth. Substituting  (\ref{equa_upper_bound_I1}) and (\ref{equa_upper_bound_I2}) into (\ref{equa_meta_dynamic_regret}), we  finally complete the proof.
\end{proof}

\textbf{Proof of Theorem \ref{theorem_recurrsive_bound}:}
\begin{proof}
According to the cosine theorem, we have 
\begin{align}
\label{equa_cosine}
-  \lrnorm{\x_t^{\ast} - \x_{t+1}}^2   + \lrnorm { \x_{t+1}^{\ast} - \x_{t+1} }^2 \le  2 \lrnorm {\x_{t+1}^{\ast} - \x_t^{\ast}} \lrnorm{\x_{t+1} - \x_{t+1}^{\ast}} - \lrnorm{\x_{t+1}^{\ast} - \x_t^{\ast}}^2.
\end{align} 

According to Lemma \ref{lemma_linear_x}, if $f_t$ is convex and smooth,  $\lrnorm{\x_{t+1} - \x_t^{\ast}}  \le \rho \lrnorm{\x_{t} - \x_t^{\ast}}$ holds for $0<\rho\le 1$. Specifically, $0<\rho <1$ holds when $f_t$ is strongly convex, and $\rho=1$ holds when $f_t$ is just convex.  We thus have 
\begin{align}
\nonumber
2 \lrnorm {\x_{t+1}^{\ast} - \x_t^{\ast}} \lrnorm{\x_{t+1} - \x_{t+1}^{\ast}} - \lrnorm{\x_{t+1}^{\ast} - \x_t^{\ast}}^2 \ge -  \rho^2 \lrnorm{\x_{t} - \x_t^{\ast}}^2   + \lrnorm { \x_{t+1}^{\ast} - \x_{t+1} }^2.
\end{align} Let $A_{t+1} = \lrnorm{\x_{t+1} - \x_{t+1}^{\ast}}$, $M_{t+1} = \lrnorm{\x_{t+1}^{\ast} - \x_t^{\ast}}$, and we thus have 
\begin{align}
\nonumber
2A_{t+1}M_{t+1} - M_{t+1}^2 \ge A_{t+1}^2 - \rho^2 A_t^2,
\end{align} that is, $(A_{t+1} - M_{t+1})^2 \le \rho^2A_t^2$. Thus, we have 
\begin{align}
\nonumber
A_{t+1} - M_{t+1} &\le \rho A_t \\ \nonumber
\rho A_{t} - \rho M_{t} &\le \rho^2 A_{t-1} \\ \nonumber
\cdots &  \\ \nonumber
\rho^{t-1} A_{2} - \rho^{t-1} M_{2} &\le \rho^t A_{1}.
\end{align} Summing up, we obtain
\begin{align}
\nonumber
A_{t+1} \le &  \rho^{t}A_1 + \left( M_{t+1} + \rho M_{t} + ... + \rho^{t-1}M_2 \right) \\ \nonumber
= & \rho^{t}\lrnorm{\x_1 - \x_1^\ast} + \sum_{i=2}^{t+1} \rho^{t+1-i} \lrnorm{\x_i^{\ast} - \x_{i-1}^{\ast}} \\ \nonumber
\refabovecir{=}{\textcircled{1}} & \sum_{i=1}^{t+1} \rho^{t+1-i} \lrnorm{\x_i^{\ast} - \x_{i-1}^{\ast}}   \\ \label{equa_A_upper_bound}
= & \lrnorm{\x_{t+1}^{\ast} - \x_t^{\ast}}  +  \sum\limits_{i=1}^{t}\rho^i \lrnorm{\x_{t+1-i}^{\ast} - \x_{t-i}^{\ast}}. 
\end{align} $\textcircled{1}$ holds due to letting $\x_0^\ast = \x_1$.

Substituting  (\ref{equa_A_upper_bound}) into (\ref{equa_cosine}), we obtain,
\begin{align}
\nonumber
-  \lrnorm{\x_t^{\ast} - \x_{t+1}}^2   + \lrnorm { \x_{t+1}^{\ast} - \x_{t+1} }^2 \le &  2 \lrnorm {\x_{t+1}^{\ast} - \x_t^{\ast}} A_{t+1} - \lrnorm{\x_{t+1}^{\ast} - \x_t^{\ast}}^2  \\ \label{equa_theorem_2_temp0}
\le & \lrnorm {\x_{t+1}^{\ast} - \x_t^{\ast}}^2 + 2\lrnorm {\x_{t+1}^{\ast} - \x_t^{\ast}} \lrincir{\sum\limits_{i=1}^{t}\rho^i \lrnorm{\x_{t+1-i}^{\ast} - \x_{t-i}^{\ast}}}.
\end{align} 
\textbf{Case $1$. } When $\lrnorm {\x_{t+1}^{\ast} - \x_t^{\ast}} > 0$, according to \eqref{equa_theorem_2_temp0}, we have
\begin{align}
-  \lrnorm{\x_t^{\ast} - \x_{t+1}}^2   + \lrnorm { \x_{t+1}^{\ast} - \x_{t+1} }^2 \refabovecir{\le}{\textcircled{1}} & \lrnorm {\x_{t+1}^{\ast} - \x_t^{\ast}}^2 + 2V\lrnorm {\x_{t+1}^{\ast} - \x_t^{\ast}}^2 \lrincir{\sum\limits_{i=1}^{t}\rho^i } \\ \nonumber
\le & \lrnorm {\x_{t+1}^{\ast} - \x_t^{\ast}}^2 + \frac{2\rho V}{1-\rho}\lrnorm {\x_{t+1}^{\ast} - \x_t^{\ast}}^2 \\ \nonumber
= & \frac{1-\rho + 2\rho V}{1-\rho}\lrnorm {\x_{t+1}^{\ast} - \x_t^{\ast}}^2. 
\end{align}  $\textcircled{1}$ holds according to Assumption \ref{assumption_bounded_variation}. 

\textbf{Case $2$. } When $\lrnorm {\x_{t+1}^{\ast} - \x_t^{\ast}} = 0$, according to \eqref{equa_theorem_2_temp0},  we have
\begin{align}
-  \lrnorm{\x_t^{\ast} - \x_{t+1}}^2   + \lrnorm { \x_{t+1}^{\ast} - \x_{t+1} }^2 = 0 = \frac{1-\rho + 2\rho V}{1-\rho}\lrnorm {\x_{t+1}^{\ast} - \x_t^{\ast}}^2. 
\end{align} Combining Case $1$ and Case $2$, we obtain
\begin{align}
\label{equa_recursive_upper_bound}
 - \lrnorm{\x_t^{\ast} - \x_{t+1}}^2   + \lrnorm { \x_{t+1}^{\ast} - \x_{t+1} }^2 \le \frac{1-\rho + 2\rho V}{1-\rho}\lrnorm {\x_{t+1}^{\ast} - \x_t^{\ast}}^2. 
\end{align}

Thus, we obtain
\begin{align}
\nonumber
& 2R^{\mathrm{o}}_T = \sum\limits_{t=1}^{T} \frac{1}{\eta_t} \lrincir{-\lrnorm{\x_t^{\ast} - \x_{t+1}}^2   + \lrnorm { \x_t^{\ast} - \x_{t} }^2} \\ \nonumber
= & \sum\limits_{t=1}^{T-1} \frac{1}{\eta_t} \lrincir{-\lrnorm{\x_t^{\ast} - \x_{t+1}}^2   + \lrnorm { \x_{t+1}^{\ast} - \x_{t+1} }^2}  + \sum\limits_{t=1}^{T-1} \lrincir{\frac{1}{\eta_{t+1}} - \frac{1}{\eta_t}} \lrnorm { \x_{t+1}^{\ast} - \x_{t+1} }^2 \\ \nonumber 
& + \frac{1}{\eta_1} \lrnorm{\x_1^{\ast} - \x_1}^2 - \frac{1}{\eta_{T}}\lrnorm{\x_T^{\ast} - \x_{T+1}}^2 \\ \nonumber
\le & \sum\limits_{t=1}^{T-1} \frac{1}{\eta_t} \lrincir{-\lrnorm{\x_t^{\ast} - \x_{t+1}}^2   + \lrnorm { \x_{t+1}^{\ast} - \x_{t+1} }^2}  + \frac{1}{\eta_1} \lrnorm{\x_1^{\ast} - \x_1}^2 + \sum\limits_{t=1}^{T-1} \lrincir{\frac{1}{\eta_{t+1}} - \frac{1}{\eta_t}} \lrnorm { \x_{t+1}^{\ast} - \x_{t+1} }^2 \\ \nonumber
\refabovecir{\le}{\textcircled{1}} & \sum\limits_{t=1}^{T-1} \frac{1}{\eta_t} \lrincir{ \frac{1-\rho + 2\rho V}{1-\rho}\lrnorm {\x_{t+1}^{\ast} - \x_t^{\ast}}^2 }  + \frac{1}{\eta_1} \lrnorm{\x_1^{\ast} - \x_1}^2 + \sum\limits_{t=1}^{T-1} \lrincir{\frac{1}{\eta_{t+1}} - \frac{1}{\eta_t}} \lrnorm { \x_{t+1}^{\ast} - \x_{t+1} }^2 \\ \nonumber
\le & \frac{1-\rho + 2\rho V}{\eta_{\min}(1-\rho)} \Scal_T^{\ast} + \sum\limits_{t=1}^{T-1} \lrincir{\frac{1}{\eta_{t+1}} - \frac{1}{\eta_t}} \lrnorm { \x_{t+1}^{\ast} - \x_{t+1} }^2  + \frac{1}{\eta_1}\lrnorm{\x_1^{\ast} - \x_1}^2.
\end{align} Here, $\eta_{\min} = \min\{\eta_1, \eta_2 ,..., \eta_T\}$. $\textcircled{1}$ holds due to \eqref{equa_recursive_upper_bound}. Dividing $\frac{1}{2}$ on both sides, we complete the proof.
\end{proof}

\textbf{Proof of Theorem \ref{theorem_s_sc_regret}:}

\begin{proof}

When the function $f_t$ is $\alpha$ strongly convex, we have 
\begin{align}
\label{equa_strongly_convex}
B_{f_t}(\x_t^{\ast}, \x_t) \ge \frac{\alpha}{2}\lrnorm{\x_t^{\ast} - \x_t}^2.
\end{align}  Substituting (\ref{equa_strongly_convex}) into Theorem \ref{theorem_high_level_regret}, we obtain 
\begin{align}
\nonumber
& R_T^{\ast} \\ \nonumber 
\le &  \sum\limits_{t=1}^{T} \frac{1}{\eta_t} \lrincir{  - \frac{1}{2} \lrnorm{\x_t^{\ast} - \x_{t+1}}^2   + \frac{1-\alpha\eta_t}{2} \lrnorm { \x_t^{\ast} - \x_t }^2 }   +  \sum\limits_{t=1}^{T} \frac{1}{\eta_t} \lrincir{ \frac{\beta\eta_t-1}{2} \lrnorm{\x_{t+1} - \x_t}^2 } \\ \nonumber 
& + \sum\limits_{t=1}^{T} \frac{1}{\eta_t} \lrincir{\eta_t(f_t(\x_t) - f_t(\x_{t+1})) } \\ \nonumber
\refabovecir{\le}{\textcircled{1}} & \sum\limits_{t=1}^{T} \frac{1}{2\eta_t} \lrincir{ -  \lrnorm{\x_t^{\ast} - \x_{t+1}}^2   + \lrnorm { \x_t^{\ast} - \x_t }^2 } \\ \nonumber
& + \sum\limits_{t=1}^{T} \frac{\eta_t\left(\beta+\frac{1}{2\eta_t} + \frac{\beta^2}{\alpha}\right)-1}{2\eta_t} \lrnorm{\x_{t+1} - \x_t}^2   + \sum\limits_{t=1}^{T}  \eta_t \lrnorm{\nabla f_t(\x_t^{\ast})}^2  \\ \nonumber
\refabovecir{\le}{\textcircled{2}} & \sum\limits_{t=1}^{T} \frac{1}{2\eta_t} \lrincir{  -  \lrnorm{\x_t^{\ast} - \x_{t+1}}^2   + \lrnorm { \x_t^{\ast} - \x_t }^2 }  +  \sum\limits_{t=1}^{T}\eta_t\lrnorm{\nabla f_t(\x_t^{\ast})}^2  \\ \nonumber
\le &  \frac{(1-\rho + 2\rho V)\lrincir{ \beta + \frac{\beta^2}{\alpha} }}{1-\rho} \Scal_T^{\ast} + \lrincir{ \beta + \frac{\beta^2}{\alpha} }\lrnorm{\x_1^{\ast} - \x_1}^2   + \frac{1}{ 2\lrincir{ \beta + \frac{\beta^2}{\alpha} } }\sum\limits_{t=1}^{T} \lrnorm{\nabla f_t(\x_t^{\ast})}^2. 
\end{align} $\textcircled{1}$ holds due to \eqref{equa_f_upper_bound} in Lemma \ref{lemma_f_upper_bound} by setting $\theta_1 = \alpha$ and $\theta_2 = 2\eta_t$. $\textcircled{2}$ holds because of $\eta_t = \frac{1}{2\lrincir{\beta+\frac{\beta^2}{\alpha}}}$ for $1\le t\le T$.  The last inequality holds due to Theorem \ref{theorem_recurrsive_bound}. 

Combining Lemma \ref{lemma_strongly_convex_regret}, we finally complete the proof.

\end{proof}

\section*{ Proof of lemmas.}

\begin{Lemma}
\label{lemma_equivalent_update_proximal}
Denote $h(\x )  = \lrangle {\eta_t \nabla f_t(\x_t), \x} + \frac{1}{2}\lrnorm {\x - \x_t}^2$.  If $\x_{t+1} = \Pi_{\Xcal} \lrincir{ \x_t - \eta_t \nabla f_t(\x_t) }$, we have
\begin{align}
\nonumber
\x_{t+1} \in \Argmin_{\x\in\Xcal} h(\x).
\end{align}

\end{Lemma}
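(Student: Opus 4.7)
The plan is to prove this by completing the square in the definition of $h$, recognizing that (up to an additive constant in $\x$) it coincides with the squared distance from $\x$ to the point $\x_t - \eta_t \nabla f_t(\x_t)$, and then invoking the definition of the Euclidean projection onto $\Xcal$.

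Concretely, I would first expand
\begin{align}
\nonumber
\tfrac{1}{2}\lrnorm{\x - \lrincir{\x_t - \eta_t \nabla f_t(\x_t)}}^2
= \tfrac{1}{2}\lrnorm{\x - \x_t}^2 + \lrangle{\eta_t \nabla f_t(\x_t), \x - \x_t} + \tfrac{1}{2}\lrnorm{\eta_t \nabla f_t(\x_t)}^2.
\end{align}
Rewriting $h(\x) = \lrangle{\eta_t\nabla f_t(\x_t), \x - \x_t} + \lrangle{\eta_t\nabla f_t(\x_t), \x_t} + \tfrac{1}{2}\lrnorm{\x - \x_t}^2$ and comparing term by term, I get
\begin{align}
\nonumber
h(\x) = \tfrac{1}{2}\lrnorm{\x - \lrincir{\x_t - \eta_t \nabla f_t(\x_t)}}^2 + C_t,
\end{align}
where $C_t := \lrangle{\eta_t\nabla f_t(\x_t), \x_t} - \tfrac{1}{2}\lrnorm{\eta_t\nabla f_t(\x_t)}^2$ does not depend on $\x$.

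The second step is to conclude. Since $C_t$ is constant in $\x$, minimizing $h$ over $\Xcal$ is equivalent to minimizing $\tfrac{1}{2}\lrnorm{\x - (\x_t - \eta_t \nabla f_t(\x_t))}^2$ over $\Xcal$. By the very definition of the Euclidean projection $\Pi_{\Xcal}$, the minimizer set of the latter problem contains $\Pi_{\Xcal}(\x_t - \eta_t \nabla f_t(\x_t)) = \x_{t+1}$, yielding $\x_{t+1} \in \Argmin_{\x\in\Xcal} h(\x)$.

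There is no real obstacle here; the lemma is essentially a restatement of the well-known equivalence between a projected-gradient step and a proximal linearized step. The only thing to be mildly careful about is using $\Argmin$ (set-valued) rather than $\argmin$, so that even if the projection is not single-valued on $\partial\Xcal$ the statement is properly interpreted; since $\tfrac{1}{2}\lrnorm{\cdot}^2$ is strictly convex and $\Xcal$ is (implicitly) convex, the minimizer is in fact unique, so the inclusion collapses to equality.
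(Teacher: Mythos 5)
Your proof is correct, but it takes a different route from the paper's. You complete the square to show $h(\x) = \tfrac{1}{2}\lrnorm{\x - (\x_t - \eta_t\nabla f_t(\x_t))}^2 + C_t$ with $C_t$ independent of $\x$, so that minimizing $h$ over $\Xcal$ is literally the nearest-point problem defining $\Pi_{\Xcal}(\x_t - \eta_t\nabla f_t(\x_t))$; the algebra checks out. The paper instead argues via optimality conditions: it writes down the first-order variational inequality $\lrangle{\nabla h(\x^\ast), \z - \x^\ast} \ge 0$ for the problem $\min_{\x\in\Xcal} h(\x)$, computes $\nabla h(\x) = \eta_t\nabla f_t(\x_t) + \x - \x_t$, and then verifies that $\x_{t+1}$ satisfies this condition by invoking the obtuse-angle property of the Euclidean projection, $\lrangle{\Pi_{\Xcal}(\y) - \y, \z - \Pi_{\Xcal}(\y)} \ge 0$ for all $\z\in\Xcal$. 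Your argument is the more elementary of the two: it needs only the definition of $\Pi_{\Xcal}$ as the distance minimizer, whereas the paper's needs both the sufficiency of the first-order condition for convex objectives on convex sets and the projection's variational characterization (which is itself usually derived by an argument like yours). The paper's route has the advantage of generalizing more readily to non-Euclidean (mirror-descent/Bregman) settings where no clean completion of the square is available. Both arguments tacitly use convexity and closedness of $\Xcal$ so that the projection is well defined, and your closing observation is right that strict (indeed strong) convexity of $h$ makes the $\Argmin$ a singleton.
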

\begin{proof}
Consider the following convex optimization problem 
\begin{align}
\label{equa_lemma_equivalent_update_proximal_objective}
\min_{\x\in \Xcal} h(\x )  = \min_{\x\in \Xcal} \lrangle {\eta_t \nabla f_t(\x_t), \x} + \frac{1}{2}\lrnorm {\x - \x_t}^2
\end{align} Denote the optimum set is $\Xcal_t^{\ast}$, that is, for any $\x^\ast \in \Xcal_t^{\ast}$, $h(\x^\ast) = \min_{\x\in \Xcal} h(\x )$ holds. 

According to the first-order optimality condition \cite{Boyd:2004}, we have, for any $\z\in\Xcal$ and $\x^\ast\in\Xcal_t^\ast$,
\begin{align}
\nonumber
0 \le & \lrangle{ \nabla h(\x^\ast), \z - \x^\ast} \\ \label{equa_lemma_equivalent_update_proximal}
= & \lrangle{ \eta_t \nabla f_t(\x_t) + \x^\ast - \x_t, \z - \x^\ast}.
\end{align} 

Recall that $\x_{t+1} = \Pi_{\Xcal} \lrincir{ \x_t - \eta_t \nabla f_t(\x_t) }$.
Thus, we have
\begin{align}
\nonumber
& \lrangle{ \eta_t \nabla f_t(\x_t) + \x_{t+1} - \x_t, \z - \x_{t+1}} \\ \nonumber
= & \lrangle{ \Pi_{\Xcal} \lrincir{ \x_t - \eta_t \nabla f_t(\x_t) } - (\x_t - \eta_t \nabla f_t(\x_t)), \z - \x_{t+1}} \\ \nonumber
\ge & 0.
\end{align}  That is, $\x_{t+1}$ satisfies the first-order optimality condition of \eqref{equa_lemma_equivalent_update_proximal_objective}. It completes the proof.
\end{proof}

\begin{Lemma}
\label{lemma_I1_before} 
Use Assumption \ref{assumption_bounded_distance_x}.
For any minimizer $\x_t^{\ast} \in \Xcal_t^{\ast}$ and $\Xcal_t^{\ast} :=  \argmin_{\x\in\Xcal} f_t(\x)$, we have 
\begin{align}
\label{equa_lemma_I1_before}
2\lrangle {\eta_t \nabla f_t(\x_t),  
\x_{t+1} - \x_t^{\ast}} \le   -\lrnorm{\x_t^{\ast} - \x_{t+1}}^2 - \lrnorm{\x_{t+1} - \x_t}^2  + \lrnorm { \x_t^{\ast} - \x_t }^2.
\end{align}
\end{Lemma}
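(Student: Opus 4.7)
The plan is to obtain this inequality directly from the first-order optimality condition of the projection step, already packaged in Lemma \ref{lemma_equivalent_update_proximal}, followed by a single application of the polarization (``three-point'') identity. Since $\x_t^{\ast} \in \Xcal_t^{\ast} \subseteq \Xcal$, it is a feasible test point in that optimality condition, so no heavy machinery beyond convexity of the projection will be needed.

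Concretely, first I would invoke Lemma \ref{lemma_equivalent_update_proximal} applied to $h(\x) = \langle \eta_t \nabla f_t(\x_t), \x\rangle + \tfrac{1}{2}\lrnorm{\x - \x_t}^2$, whose unique minimizer over $\Xcal$ is $\x_{t+1}$. The first-order optimality inequality shown in the proof of that lemma gives, for the feasible choice $\z = \x_t^{\ast}$,
\begin{align}
\nonumber
\lrangle{\eta_t \nabla f_t(\x_t) + \x_{t+1} - \x_t,\; \x_t^{\ast} - \x_{t+1}} \ge 0,
\end{align}
which rearranges to
\begin{align}
\nonumber
\lrangle{\eta_t \nabla f_t(\x_t),\; \x_{t+1} - \x_t^{\ast}} \le \lrangle{\x_{t+1} - \x_t,\; \x_t^{\ast} - \x_{t+1}}.
\end{align}

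Next I would apply the polarization identity $2\langle \a,\b\rangle = \lrnorm{\a+\b}^2 - \lrnorm{\a}^2 - \lrnorm{\b}^2$ with $\a := \x_{t+1} - \x_t$ and $\b := \x_t^{\ast} - \x_{t+1}$, noting that $\a + \b = \x_t^{\ast} - \x_t$. Doubling the previous display and substituting,
\begin{align}
\nonumber
2\lrangle{\eta_t \nabla f_t(\x_t),\; \x_{t+1} - \x_t^{\ast}} \le \lrnorm{\x_t^{\ast} - \x_t}^2 - \lrnorm{\x_{t+1} - \x_t}^2 - \lrnorm{\x_t^{\ast} - \x_{t+1}}^2,
\end{align}
which is exactly \eqref{equa_lemma_I1_before}.

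There is really no ``hard part'' here; the only thing to be careful about is (i) verifying that $\x_t^{\ast}$ is admissible as the test point in the variational inequality (it is, by definition of $\Xcal_t^{\ast}$), and (ii) keeping the sign bookkeeping straight when swapping $\x_t^{\ast} - \x_{t+1}$ for $\x_{t+1} - \x_t^{\ast}$ and doubling. Note that Assumption \ref{assumption_bounded_distance_x} is cited in the lemma statement but is not actually needed for the inequality itself; it enters only when this bound is used downstream in Theorem \ref{theorem_high_level_regret} to control the $R^{\mathrm{o}}_T$ term.
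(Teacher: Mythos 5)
Your proof is correct and follows essentially the same route as the paper's: both rest on the first-order optimality of the projection step (Lemma \ref{lemma_equivalent_update_proximal}) tested at the feasible point $\x_t^{\ast}$, followed by the polarization identity with $\a=\x_{t+1}-\x_t$ and $\b=\x_t^{\ast}-\x_{t+1}$; the paper merely re-derives that variational inequality via the perturbation $\bar{\x}=\x_{t+1}+\tau(\x_t^{\ast}-\x_{t+1})$ and a $\tau\to 0^{+}$ limit rather than quoting it directly. Your side remark that Assumption \ref{assumption_bounded_distance_x} is never actually used in establishing \eqref{equa_lemma_I1_before} is also accurate.
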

\begin{proof}
First, we construct an auxiliary function $h(\cdot )  = \lrangle {\eta_t \nabla f_t(\x_t), \cdot} + \frac{1}{2}\lrnorm {\cdot - \x_t}^2$. According to Lemma \ref{lemma_equivalent_update_proximal}, we have $\x_{t+1} \in \Argmin\limits_{\x\in \Xcal} h(\x)$. Let $\bar{\x} = \x_{t+1} + \tau (\x_t^{\ast} - \x_{t+1})$ with $\tau\in (0,1]$. 
\begin{align}
\nonumber
0 \le & h(\bar{\x}) - h(\x_{t+1}) \\ \label{equa_tau}
= & \lrangle {\eta_t \nabla f_t(\x_t), \tau (\x_t^{\ast} - 
\x_{t+1})} + \frac{1}{2}\lrnorm{\bar{\x}}^2 - \frac{1}{2}\lrnorm {\x_{t+1}}^2  + \tau \lrangle {\x_t, \x_{t+1} - \x_t^{\ast}}.
\end{align}  Dividing $\tau$ on both sides, we obtain
\begin{align}
\nonumber
0 \le &  \lrangle {\eta_t \nabla f_t(\x_t), \x_t^{\ast} - 
\x_{t+1}} + \frac{1}{2\tau}\left (\lrnorm{\bar{\x}}^2 - \lrnorm {\x_{t+1}}^2\right)  + \lrangle {\x_t, \x_{t+1} - \x_t^{\ast}}\\ \nonumber
\refabovecir{\le}{\textcircled{1}} & \lrangle {\eta_t \nabla f_t(\x_t), \x_t^{\ast} - 
\x_{t+1}}  + \lim_{\tau\rightarrow 0^{+}}\frac{1}{\tau}\left (\frac{1}{2}\lrnorm{\bar{\x}}^2 - \frac{1}{2}\lrnorm {\x_{t+1}}^2\right) + \lrangle {\x_t, \x_{t+1} - \x_t^{\ast}}\\ \nonumber
= & \lrangle {\eta_t \nabla f_t(\x_t), \x_t^{\ast} - 
\x_{t+1}}  + \lim_{\tau\rightarrow 0^{+}}\lrincir{ \frac{\tau}{2}\lrnorm{\x_t^{\ast} - \x_{t+1}}^2 + \lrangle{ \x_{t+1}, \x_t^\ast - \x_{t+1} } }  + \lrangle {\x_t, \x_{t+1} - \x_t^{\ast}}\\ \nonumber
= & \lrangle {\eta_t \nabla f_t(\x_t), \x_t^{\ast} - 
\x_{t+1}} + \lrangle{\x_{t+1}, \x_t^{\ast} - \x_{t+1}}  + \lrangle {\x_t, \x_{t+1} - \x_t^{\ast}}\\ \nonumber
= & \lrangle {\eta_t \nabla f_t(\x_t), \x_t^{\ast} - 
\x_{t+1}} - \frac{1}{2} \lrnorm{\x_t^{\ast} - \x_{t+1}}^2  - \frac{1}{2}\lrnorm{\x_{t+1} - \x_t}^2  + \frac{1}{2}\lrnorm { \x_t^{\ast} - \x_t }^2.
\end{align} $\textcircled{1}$ holds because that  (\ref{equa_tau}) holds for any $\tau \in (0,1]$. 
 Re-arranging the items, we prove the conclusion.
\end{proof}

\begin{Lemma}
\label{lemma_f_upper_bound} 

Suppose that all $f_t$'s are $\beta$ smooth. For any $\theta_1>0$, $\theta_2>0$ and any   minimizer $\x_t^{\ast} \in \argmin_{\x\in\Xcal}f_t(\x)$, we have 

\begin{align}
\label{equa_f_upper_bound}
& f_t(\x_t) - f_t(\x_{t+1})  \\ \nonumber
\le & \frac{\theta_1}{2}\lrnorm{\x_t - \x_t^{\ast}}^2 + \lrincir{\frac{\beta^2}{2\theta_1}+\frac{1}{2\theta_2}}\lrnorm{\x_{t+1} - \x_t}^2  +\frac{\theta_2}{2}\lrnorm{\nabla f_t(\x_t^{\ast})}^2.
\end{align}
\end{Lemma}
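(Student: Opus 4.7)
\textbf{Proof plan for Lemma \ref{lemma_f_upper_bound}.} The goal is to bound $f_t(\x_t)-f_t(\x_{t+1})$ by quantities involving $\lrnorm{\x_t-\x_t^{\ast}}^2$, $\lrnorm{\x_{t+1}-\x_t}^2$, and $\lrnorm{\nabla f_t(\x_t^{\ast})}^2$. My plan is to first linearize the functional difference via convexity, then split the resulting gradient around the minimizer $\x_t^{\ast}$, and finally apply Young's inequality twice with parameters tuned so that the coefficients coincide with the target $(\theta_1,\theta_2)$ form.

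Concretely, the first step is to invoke convexity of $f_t$ (which is part of the paper's global assumptions) to obtain $f_t(\x_{t+1}) \ge f_t(\x_t) + \lrangle{\nabla f_t(\x_t), \x_{t+1}-\x_t}$, which rearranges to $f_t(\x_t)-f_t(\x_{t+1}) \le \lrangle{\nabla f_t(\x_t), \x_t-\x_{t+1}}$. The second step is the key decomposition
\begin{align}
\nonumber
\lrangle{\nabla f_t(\x_t), \x_t-\x_{t+1}} = \lrangle{\nabla f_t(\x_t)-\nabla f_t(\x_t^{\ast}), \x_t-\x_{t+1}} + \lrangle{\nabla f_t(\x_t^{\ast}), \x_t-\x_{t+1}},
\end{align}
which isolates the part of the gradient that is controlled by smoothness from the part that is simply $\nabla f_t(\x_t^{\ast})$.

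The third step handles the two inner products by Young's inequality. For the first, I would choose the Young's parameter equal to $\beta^2/\theta_1$, giving
\begin{align}
\nonumber
\lrangle{\nabla f_t(\x_t)-\nabla f_t(\x_t^{\ast}), \x_t-\x_{t+1}} \le \frac{\theta_1}{2\beta^2}\lrnorm{\nabla f_t(\x_t)-\nabla f_t(\x_t^{\ast})}^2 + \frac{\beta^2}{2\theta_1}\lrnorm{\x_t-\x_{t+1}}^2,
\end{align}
and then apply $\beta$-smoothness in its gradient-Lipschitz form, $\lrnorm{\nabla f_t(\x_t)-\nabla f_t(\x_t^{\ast})} \le \beta\lrnorm{\x_t-\x_t^{\ast}}$, so the first summand collapses to $\frac{\theta_1}{2}\lrnorm{\x_t-\x_t^{\ast}}^2$. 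For the second inner product I would use Young's with parameter $1/\theta_2$ to obtain $\lrangle{\nabla f_t(\x_t^{\ast}), \x_t-\x_{t+1}} \le \frac{\theta_2}{2}\lrnorm{\nabla f_t(\x_t^{\ast})}^2 + \frac{1}{2\theta_2}\lrnorm{\x_t-\x_{t+1}}^2$. Summing the two bounds collects the coefficients of $\lrnorm{\x_{t+1}-\x_t}^2$ into $\frac{\beta^2}{2\theta_1}+\frac{1}{2\theta_2}$, which exactly matches \eqref{equa_f_upper_bound}.

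There is no substantive obstacle here; the only thing to be careful about is that the standard $\beta$-smoothness inequality applied naively yields only a lower bound on $f_t(\x_t)-f_t(\x_{t+1})$, so I must instead exploit convexity to get a usable upper bound, and then use smoothness only for the Lipschitz-gradient estimate $\lrnorm{\nabla f_t(\x_t)-\nabla f_t(\x_t^{\ast})}\le\beta\lrnorm{\x_t-\x_t^{\ast}}$. The choice of Young's parameters is dictated by matching the target coefficients, so it is mechanical once the decomposition is in place.
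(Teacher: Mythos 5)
Your proposal is correct and follows essentially the same route as the paper's proof: linearize $f_t(\x_t)-f_t(\x_{t+1})$ via convexity, split the gradient as $\nabla f_t(\x_t)=\bigl(\nabla f_t(\x_t)-\nabla f_t(\x_t^{\ast})\bigr)+\nabla f_t(\x_t^{\ast})$, apply Young's inequality with parameters $\beta^2/\theta_1$ and $1/\theta_2$, and finish with the Lipschitz-gradient consequence of $\beta$-smoothness. The paper merely presents the $\theta_2$ Young's step by moving those terms to the left-hand side before chaining the inequalities, which is a cosmetic difference.
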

\begin{proof}
For any $\theta_1>0$ and $\theta_2>0$, we have
\begin{align}
\nonumber
&f_t(\x_t) - f_t(\x_{t+1}) -\frac{\theta_2}{2}\lrnorm{\nabla f_t(\x_t^{\ast})}^2 - \frac{1}{2\theta_2} \lrnorm{\x_t - \x_{t+1}}^2\\ \nonumber
\le & \lrangle{\nabla f_t(\x_t), \x_t - \x_{t+1}} - \lrangle{\nabla f_t(\x_t^{\ast}), \x_t - \x_{t+1}}  \\ \nonumber
\le & \frac{\theta_1}{2\beta^2}\lrnorm{\nabla f_t(\x_t) - \nabla f_t(\x_t^{\ast})}^2 + \frac{\beta^2}{2\theta_1}\lrnorm{\x_{t+1} - \x_t}^2 \\ \nonumber
\le & \frac{\theta_1}{2}\lrnorm{\x_t - \x_t^{\ast}}^2 + \frac{\beta^2}{2\theta_1}\lrnorm{\x_{t+1} - \x_t}^2.
\end{align} The last inequality holds because that all $f_t$'s are $\beta$ smooth. Re-arranging the items, we thus complete the proof.
\end{proof}

\begin{Lemma}[Appeared in Theorem $1$ in \cite{Mokhtari:2016jz} ]
\label{lemma_strongly_convex_regret}
Suppose that Assumptions \ref{assumption_f_t_strongly_convex}-\ref{assumption_bounded_distance_x} hold, and all $f_t$ are $\beta$ smooth. Thus, $\rho = \sqrt{\frac{\kappa-1}{\kappa}} <1$ with $\kappa := \frac{\beta}{\alpha}$. Set $\eta_t \le \frac{1}{L}$ in OGD, i.e., Algorithm \ref{algo_ogd}. The dynamic regret of OGD is bounded as
\begin{align}
\nonumber
R_T^{\ast} \le \frac{G\lrnorm{\x_1-\x_1^{\ast}}}{1-\rho} \Pcal_T^{\ast} + \frac{G}{1-\rho}.
\end{align}

\end{Lemma}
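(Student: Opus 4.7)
The plan is to reduce the instantaneous regret $f_t(\x_t) - f_t(\x_t^{\ast})$ to the distance $\lrnorm{\x_t - \x_t^{\ast}}$, then exploit the contraction guaranteed by Lemma \ref{lemma_linear_x} to telescope the cumulative distance in terms of $\Pcal_T^{\ast}$. First, by convexity of $f_t$ and Cauchy--Schwarz,
\begin{align}
\nonumber
f_t(\x_t) - f_t(\x_t^{\ast}) \le \lrangle{\nabla f_t(\x_t), \x_t - \x_t^{\ast}} \le \lrnorm{\nabla f_t(\x_t)}\,\lrnorm{\x_t - \x_t^{\ast}},
\end{align}
which combined with Assumption \ref{assumption_bounded_gradient} bounds each term by a constant factor times $\lrnorm{\x_t - \x_t^{\ast}}$. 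Summing yields $R_T^{\ast} \lesssim \sum_{t=1}^{T} \lrnorm{\x_t - \x_t^{\ast}}$, so the task reduces to controlling this cumulative distance.

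Next, I would set up the key one-step recursion. Since $\x_{t+1} = \Pi_{\Xcal}(\x_t - \eta_t \nabla f_t(\x_t))$ with $\eta_t \le 1/\beta$, Lemma \ref{lemma_linear_x} (applied with $\v_t = \x_t$) gives $\lrnorm{\x_{t+1} - \x_t^{\ast}} \le \rho \lrnorm{\x_t - \x_t^{\ast}}$. Combining with the triangle inequality $\lrnorm{\x_{t+1} - \x_{t+1}^{\ast}} \le \lrnorm{\x_{t+1} - \x_t^{\ast}} + \lrnorm{\x_t^{\ast} - \x_{t+1}^{\ast}}$ produces
\begin{align}
\nonumber
\lrnorm{\x_{t+1} - \x_{t+1}^{\ast}} \le \rho \lrnorm{\x_t - \x_t^{\ast}} + \lrnorm{\x_{t+1}^{\ast} - \x_t^{\ast}}.
\end{align}
Denoting $a_t := \lrnorm{\x_t - \x_t^{\ast}}$ and $m_t := \lrnorm{\x_t^{\ast} - \x_{t-1}^{\ast}}$, this is a linear recursion $a_{t+1} \le \rho a_t + m_{t+1}$ with $0<\rho<1$.

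Third, I would unroll the recursion to obtain $a_t \le \rho^{t-1}a_1 + \sum_{i=2}^{t}\rho^{t-i} m_i$, and then sum over $t$. Swapping the order of summation and using $\sum_{k\ge 0}\rho^k = 1/(1-\rho)$,
\begin{align}
\nonumber
\sum_{t=1}^{T} a_t \le \frac{a_1}{1-\rho} + \frac{1}{1-\rho}\sum_{i=2}^{T} m_i = \frac{\lrnorm{\x_1 - \x_1^{\ast}} + \Pcal_T^{\ast}}{1-\rho}.
\end{align}
Plugging this back into the cumulative form of the first step yields a bound of the shape $\frac{G}{1-\rho}(\lrnorm{\x_1 - \x_1^{\ast}} + \Pcal_T^{\ast})$, matching the claimed inequality up to re-grouping constants.

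The main obstacle is the clean derivation of the recursion for $a_t$: one must be careful that the contraction from Lemma \ref{lemma_linear_x} compares $\x_{t+1}$ with the \emph{current} optimum $\x_t^{\ast}$ while the path length naturally connects $\x_t^{\ast}$ to $\x_{t+1}^{\ast}$, so the triangle inequality must be applied in the right direction before unrolling. Once the recursion $a_{t+1} \le \rho a_t + m_{t+1}$ is established, the remaining work is a routine geometric-series computation and does not involve further use of smoothness or strong convexity beyond what Lemma \ref{lemma_linear_x} has already supplied.
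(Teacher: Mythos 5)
Your argument is correct, and it is in fact the standard proof of this result: the paper itself gives no proof of Lemma \ref{lemma_strongly_convex_regret}, importing it verbatim from Theorem 1 of \cite{Mokhtari:2016jz}, and your reconstruction (convexity plus Cauchy--Schwarz to reduce the regret to $\sum_t \lrnorm{\x_t-\x_t^{\ast}}$, the contraction of Lemma \ref{lemma_linear_x} combined with the triangle inequality to get $a_{t+1}\le\rho a_t+m_{t+1}$, then a geometric-series unrolling) is exactly the argument used there. Two small mismatches with the printed statement are worth flagging, neither of which is a fault in your proof. First, your derivation yields the additive bound $\frac{G}{1-\rho}\bigl(\lrnorm{\x_1-\x_1^{\ast}}+\Pcal_T^{\ast}\bigr)$, whereas the lemma as printed has the product $\frac{G\lrnorm{\x_1-\x_1^{\ast}}}{1-\rho}\Pcal_T^{\ast}+\frac{G}{1-\rho}$; the two are not literally equivalent (e.g.\ when $\lrnorm{\x_1-\x_1^{\ast}}<1$ the printed form is the smaller one), and the additive form you obtain is the one that actually follows from the argument --- the printed grouping appears to be a transcription slip, consistent with how the same quantity is handled in the Corollary's proof. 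Second, Assumption \ref{assumption_bounded_gradient} bounds the \emph{squared} gradient norm by $G$, so the Cauchy--Schwarz step gives a factor $\sqrt{G}$ rather than $G$; you gloss this as ``a constant factor,'' which is fine for the asymptotic claim but should be stated explicitly if the constant is to match the lemma. Your closing remark about applying the triangle inequality in the right direction (contracting toward $\x_t^{\ast}$ before switching to $\x_{t+1}^{\ast}$) identifies precisely the one place where the argument could go wrong, and you handle it correctly.
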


\section*{Acknowledgment}

This work was supported by the National Key R \& D Program of China 2018YFB1003203 and the National Natural Science Foundation of China (Grant No. 61672528, 61773392, and 61671463). 

% Can use something like this to put references on a page
% by themselves when using endfloat and the captionsoff option.

\balance

\bibliographystyle{abbrvnat}
\bibliography{reference}

% that's all folks
\end{document}